\documentclass{article}
\usepackage[T1]{fontenc}

\usepackage{microtype}
\usepackage{graphicx}
\usepackage{subcaption}
\usepackage{booktabs} 
\usepackage{enumitem}

\usepackage{hyperref}

\usepackage[accepted]{icml2026}
\usepackage{xcolor} 
\renewcommand{\algorithmiccomment}[1]{\hfill \textcolor{gray}{// #1}}

\usepackage{amsmath}
\usepackage{amssymb}
\usepackage{mathtools}
\usepackage{amsthm}
\usepackage{natbib}
\usepackage{aliascnt}
\usepackage[capitalize,noabbrev]{cleveref}
\usepackage{float}
\theoremstyle{plain}
\newtheorem{theorem}{Theorem}[section]
\newtheorem{proposition}[theorem]{Proposition}
\newtheorem{lemma}[theorem]{Lemma}

\theoremstyle{definition}
\newaliascnt{definition}{theorem}
\newtheorem{definition}[definition]{Definition}
\aliascntresetthe{definition}

\newaliascnt{assumption}{theorem}

\aliascntresetthe{assumption}

\theoremstyle{remark}
\newaliascnt{remark}{theorem}

\aliascntresetthe{remark}

\crefname{definition}{Definition}{Definitions}
\Crefname{definition}{Definition}{Definitions}
\crefname{assumption}{Assumption}{Assumptions}
\Crefname{assumption}{Assumption}{Assumptions}
\crefname{remark}{Remark}{Remarks}
\Crefname{remark}{Remark}{Remarks}

\usepackage[textsize=tiny]{todonotes}

\icmltitlerunning{Quantifying and Optimizing Simplicity via Polynomial Representations}

\usepackage{xspace} 
\newcommand{\shortterm}{ED\xspace}
\newcommand{\longterm}{effective degree\xspace}
\newcommand{\longtermnocaps}{effective degree\xspace}

\begin{document}

\twocolumn[
    \icmltitle{Quantifying and Optimizing Simplicity via Polynomial Representations}



  \icmlsetsymbol{equal}{*}

  \begin{icmlauthorlist}
    \icmlauthor{Tianren Zhang}{equal,thu}
    \icmlauthor{Xiangxin Li}{equal,thu}
    \icmlauthor{Minghao Xiao}{equal,thu}
    \icmlauthor{Guanyu Chen}{thu}
    \icmlauthor{Feng Chen}{thu}
  \end{icmlauthorlist}

  \icmlaffiliation{thu}{Department of Automation, Tsinghua University, Beijing, China}

  \icmlcorrespondingauthor{Feng Chen}{chenfeng@mail.tsinghua.edu.cn}

  \icmlkeywords{Simplicity Metric,Neural Networks,Polynomial Representations}

  \vskip 0.3in
]



\printAffiliationsAndNotice{\icmlEqualContribution}

\begin{abstract}
  Deep networks often exhibit a preference for ``simple'' solutions, and such a simplicity bias is widely believed to play a key role in generalization. Yet a broadly applicable, quantitative measure of simplicity remains elusive. We introduce \emph{polynomial representations} as a distribution-aware, low-dimensional surrogate for neural functions: we approximate a network’s predictive behavior along data-dependent interpolation paths using orthogonal polynomial bases, yielding a compact functional representation. We show that the \emph{effective degree} of this representation serves as a practical simplicity metric that is predictive of generalization across tasks and architectures, and consistently outperforms existing generalization proxies such as sharpness. Finally, polynomial representations naturally yield a \emph{differentiable} simplicity regularizer, which consistently improves generalization in image and text classification, fine-tuning contrastive vision–language models, and reinforcement learning.\footnote{Code is available at: \url{https://github.com/xinzaixinzai/Effective-Degree}}
\end{abstract}

\section{Introduction}

Deep learning models routinely generalize well despite being heavily over-parameterized. A widely discussed explanation is \emph{simplicity bias}: when many solutions fit the training data, the learning dynamics tend to converge to functions that are “simpler” in an appropriate sense, and such simplicity is conjectured to underpin generalization~\citep{hinton1993keeping,gunasekar_implicit_2018,perez_deep_2019,lyu_gradient_2021,huh_low-rank_2023}. However, turning this intuition into a practical tool faces a fundamental obstacle—what exactly does “simple” mean for a neural function, and how can we quantify it in a way that is useful for modern deep learning?

A useful simplicity measure for neural networks should ideally satisfy three desiderata: \textbf{(i) generality} across tasks and architectures, \textbf{(ii) quantifiability} at scale for trained models, and \textbf{(iii) optimizability} via (approximate) differentiability.
Existing proposals typically satisfy only part of these. Provable implicit-bias characterizations such as max-margin or minimum-norm solutions~\citep{soudry_implicit_2018,gunasekar_characterizing_2018} provide clean notions of simplicity, but largely hold in restricted regimes and do not extend directly to deep nonlinear models~\citep{chizat_lazy_2019,zhang_feature_2024}.
Information-theoretic notions based on compression or description length~\citep{schmidhuber_discovering_1997,dziugaite_computing_2017,arora_stronger_2018,goldblum_position_2024} offer universal principles, but are difficult to quantify for neural functions and rarely yield a practical training objective. Geometry- or capacity-based notions such as splines and linear-region counts~\citep{montufar2014number,raghu2017expressive} are conceptually aligned with expressivity and hence complexity, yet are architecture-dependent and hard to estimate at scale.
Meanwhile, many popular proxies typically live in parameter space (e.g., norms, parameter Jacobian, and sharpness~\citep{neyshabur2015norm,sokolic2017robust,keskar_large-batch_2017,lee2023implicit}) and can be sensitive to \emph{reparameterization} and implementation details~\citep{nagarajan_uniform_2019,andriushchenko2023modern}.
Consequently, it remains challenging to define a simplicity metric that is simultaneously general, quantifiable, and optimizable.

Motivated by this gap, in this work we take a \emph{function-space} approach: if simplicity is ultimately a property of the learned function rather than its parameterization, then quantifying simplicity directly in function space is inherently more compatible with generality and robustness to reparameterization—and, when coupled with a differentiable estimator, can also support optimization. The main challenge is that for an arbitrary high-dimensional function, defining a meaningful notion of simplicity is nontrivial on its own. Our strategy is therefore to introduce a \emph{surrogate function family} that (i) admits a \emph{natural} and tractable notion of simplicity and (ii) can be estimated reliably from data. This leads us to a natural choice of polynomials, which form a simple yet expressive surrogate family—capable of approximating continuous functions on compact domains~\citep{stone1948generalized,trefethen2019approximation}—while admitting a natural notion of complexity via degree. Concretely, we approximate a network’s predictive behavior along data-dependent \emph{interpolation paths} using orthogonal polynomial bases. Paths are formed by interpolating between data points either in the input space (for continuous inputs) or in an appropriate continuous representation space (for discrete inputs), yielding compact, low-dimensional \emph{polynomial representations} that are stable to fit and sidestepping the combinatorial growth of multivariate polynomial bases.

This representation leads to an intuitive notion of simplicity: degree of nonlinearity.
Along an interpolation path, low-degree surrogates are close to affine behavior, while higher degrees are needed when predictions vary more nonlinearly.
Notably, we show that the \emph{effective degree} of the fitted polynomial expansion provides a general and quantifiable proxy for functional simplicity: across architectures and datasets, it tracks generalization throughout training and is more predictive than widely used measures such as sharpness~\citep{keskar_large-batch_2017,kwon_asam_2021}.

Beyond measurement, polynomial representations can also lead to simplicity-aware learning objectives. Thanks to the differentiable polynomial fitting procedure, polynomial representations naturally yield a differentiable and tractable simplicity regularizer that explicitly penalizes high-degree components of the learned function. Empirically, we show that incorporating this regularizer into training consistently improves generalization across diverse settings, including image and text classification, fine-tuning contrastive vision–language models, and reinforcement learning.

\textbf{Contributions.}\vspace{-0.6em}
\begin{enumerate}
\item We propose polynomial representations, a functional surrogate for neural networks obtained by fitting orthogonal polynomials along interpolation paths.
\item We show that the effective degree of this representation provides a simplicity metric that is general and quantifiable across tasks and architectures, tracks generalization, and outperforms existing metrics.
\item We derive a differentiable simplicity regularizer from polynomial representations and demonstrate consistent gains across multiple modalities and tasks.
\end{enumerate}

\section{Related Work}

\textbf{Simplicity bias and implicit bias of neural networks.}
A large body of work aims to explain
generalization in over-parameterized networks via implicit regularization from gradient-based training~\citep{neyshabur2014search,zhang_understanding_2017,belkin_reconciling_2019}. In linear models and certain homogeneous networks, gradient descent provably converges to minimum-norm or max-margin solutions~\citep{soudry_implicit_2018,gunasekar_characterizing_2018,ji_implicit_2019,lyu_gradient_2021,abbe_generalization_2023,zhang_when_2025}, yielding clean simplicity notions in restricted regimes. In deep nonlinear regimes, however, simplicity bias is more elusive: the learned function often depends intricately on architecture, data distribution, and optimization details. 
Empirically, training dynamics often fit coarse structure before memorizing noise~\citep{arpit2017closer,nakkiran_sgd_2019} and exhibit spectral or frequency biases in which low-frequency components are learned earlier or more readily~\citep{rahaman2019spectral,xu_frequency_2020,teney_neural_2024}. Related function-space analyses use Fourier features, kernels, and linearization~\citep{tancik2020fourier,lee_deep_2018,jacot_neural_2018,lee2019wide}, as well as approximation-theoretic function classes such as Barron-type norms~\citep{barron2018approximation}.
Finally, piecewise-linear or spline perspectives exploit the fact that ReLU networks induce
spline-like functions with locally simple structure~\citep{montufar2014number,raghu2017expressive,balestriero2018spline}, motivating geometric interpretations of representation complexity.
While insightful, these lines of work do not directly yield a general, computable simplicity metric.
Our work aligns with this function-space view of simplicity bias, but operationalizes it differently by using polynomial representations as a functional surrogate, which naturally admits a general, computable proxy for functional simplicity.

\textbf{Generalization proxies for neural networks.}
A separate literature proposes post-hoc generalization measures---quantities computed from a trained network that aim to predict or upper bound test error.
These include norm- and margin-based measures~\citep{neyshabur2015norm, bartlett_spectrally-normalized_2017}, sharpness measures based on local curvature or sensitivity~\citep{keskar_large-batch_2017, dinh2017sharp}, as well as PAC-Bayes and compression viewpoints~\citep{mcallester_pac-bayesian_1999,dziugaite_computing_2017,arora_stronger_2018}. However, many of these measures are sensitive to reparameterization or implementation details~\citep{dinh2017sharp}, and their empirical reliability can vary substantially across settings~\citep{jiang_fantastic_2019}. Our metric instead comes from a function-space approximation and is \emph{architecture-agnostic} by construction.
Finally, geometric complexity notions such as region counts have also been explored as generalization proxies in classification~\citep{somepalli2022can,li_understanding_2025}, but are typically intractable to compute at scale and do not directly provide a differentiable training-time signal.

\begin{figure*}[t]
    \centering
    \includegraphics[width=0.95\textwidth]{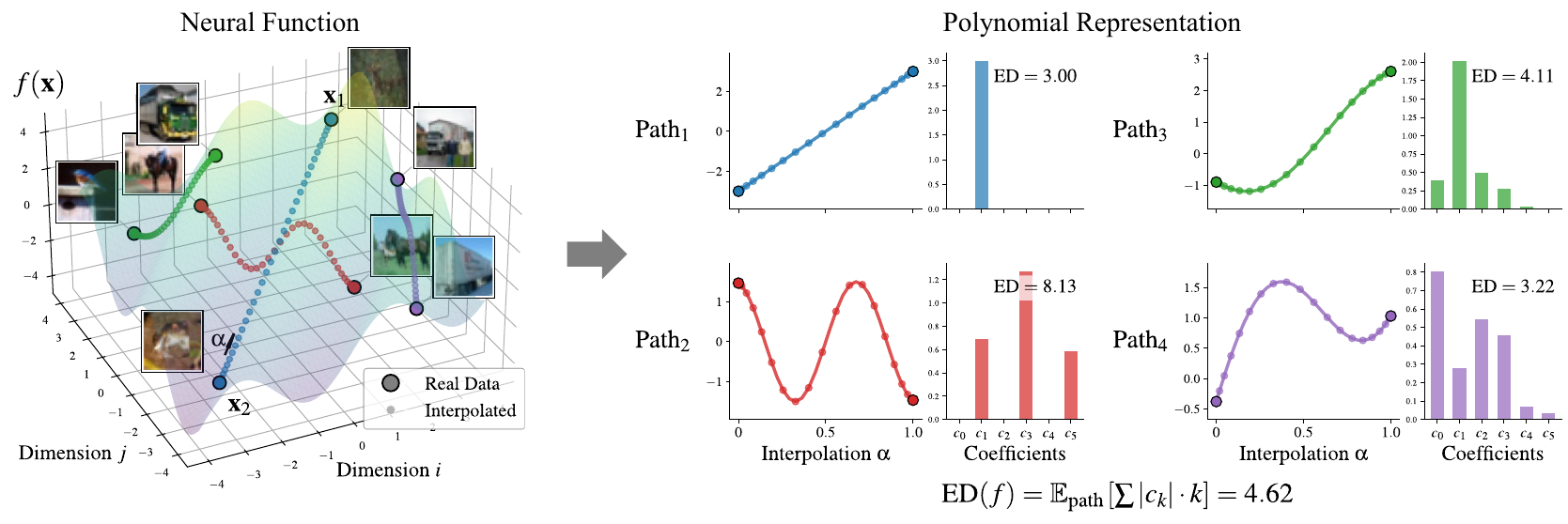} \vspace{-0.7em}
    \caption{\textbf{Method overview.} 
    \textbf{Left:} An illustration of the functional landscape of a neural function $f$ by sampling interpolation paths between data points. 
    \textbf{Right:} The function $f$'s output along these paths is approximated using polynomial expansions. Coefficient histograms reveal the complexity: smoother paths (e.g., Path$_1$) yield low-degree coefficients, whereas oscillating paths (e.g., Path$_2$) show significant high-degree components. 
    The \emph{effective degree (ED)}, computed by averaging coefficient-weighted degrees of fitted polynomials over multiple interpolation paths, serves as a differentiable proxy for functional simplicity.}
    \label{fig:method_overview}
    \vspace{-0.8em}
\end{figure*}

\textbf{Regularization for improving generalization.}
Many methods explicitly regularize training to encourage generalization, including weight decay, margin-based penalties~\citep{liu2016large,elsayed2018large}, Jacobian or smoothness regularization~\citep{drucker1992improving,sokolic2017robust,hoffman2019robust,lee2023implicit}, and sharpness-aware training~\citep{foret_sharpness-aware_2021,kwon_asam_2021}.
In comparison, our regularizer is defined in function space rather than on the model parameters, and is therefore invariant to reparameterization. Empirically, we compare our method with representative regularizers in this category.

\section{Polynomial Representations}
\label{sec:poly_repr}

We estimate functional simplicity by approximating a predictor with a polynomial surrogate and extracting a degree-based notion of complexity.
We start from a general polynomial-basis approximation for multivariate inputs, discuss its intractability in high dimensions, and then present key ingredients of our approach: (i) restrict the network to data-dependent one-dimensional interpolation paths, (ii) reduce high-dimensional outputs by PCA, and (iii) ensure numerical stability via orthogonal polynomial bases and a randomized cosine sampling strategy.

\textbf{Polynomial basis approximation.}
Let $f:\mathbb{R}^d\to\mathbb{R}$ be a (possibly unknown) function.
Given a family of polynomial basis functions $\mathcal{B}=\{\phi_{\boldsymbol{\alpha}}(\mathbf{x})\}_{\boldsymbol{\alpha}\in\mathcal{I}}$ indexed by multi-indices $\boldsymbol{\alpha}$ (e.g., monomials), we consider the truncated expansion
\begin{equation}
    f(\mathbf{x}) \approx \sum\nolimits_{\boldsymbol{\alpha}\in\mathcal{I}_K} c_{\boldsymbol{\alpha}}\,\phi_{\boldsymbol{\alpha}}(\mathbf{x}),
    \label{eq:multivar_poly}
\end{equation}
where $\mathcal{I}_K$ collects basis elements up to a prescribed maximum degree $K$.
The coefficients $\{c_{\boldsymbol{\alpha}}\}$ can be obtained by least-squares fitting from point evaluations of $f$.

However, directly fitting \cref{eq:multivar_poly} in the ambient input space is prohibitive when $d$ is large: the number of basis functions in $\mathcal{I}_K$ grows combinatorially with $d$ and $K$, making both fitting and coefficient-based complexity estimation intractable.
Moreover, modern predictors may also have high-dimensional outputs (e.g., many logits or token-level outputs), which further increases the computational burden if one were to fit a separate surrogate per coordinate.

\textbf{Input reduction.}
To obtain a tractable estimate of functional simplicity, we restrict $f$ to data-dependent one-dimensional paths.
Concretely, given $\mathbf{x}_1,\mathbf{x}_2$ drawn from a data distribution $\mathcal{D}$ on $\mathbb{R}^d$, we define the interpolation path
\begin{equation}
    \mathbf{x}(\alpha)=\alpha\mathbf{x}_1+(1-\alpha)\mathbf{x}_2,\quad \alpha\in[0,1],
    \label{eq:path}
\end{equation}
and the corresponding univariate restriction
\begin{equation}
    g_{\mathbf{x}_1,\mathbf{x}_2}(\alpha)\coloneqq f(\mathbf{x}(\alpha)).
\end{equation}
We then fit a \emph{univariate} polynomial surrogate $P_{\mathbf{x}_1,\mathbf{x}_2}(\alpha)$ to $g_{\mathbf{x}_1,\mathbf{x}_2}(\alpha)$ and read off complexity from its coefficients.
Averaging over many random pairs yields an estimate anchored to the data distribution while avoiding multivariate polynomial growth.
Similar approaches of constraining high-dimensional input space to low- or one-dimensional subspace have also been used by prior work~\citep{fridovich-keil_spectral_2022,teney_we_2025,li_understanding_2025}.
However, a natural question is whether restricting to interpolation paths discards the very notion of polynomial complexity/simplicity we aim to measure.
In particular, \emph{algebraic degree} is a natural complexity proxy for polynomials, but after reducing the input to one-dimensional interpolations, the resulting univariate restriction could in principle have a smaller degree due to cancellations.
Fortunately, we can show that for multivariate polynomials, such degree drops occur only on a measure-zero set of interpolation directions; hence averaging over random interpolation paths preserves the degree ordering almost surely.

\begin{theorem}[Order preservation of degree via interpolation paths]
\label{thm:path_preserve_complexity}
Let $P_1,P_2:\mathbb{R}^d\to\mathbb{R}$ be nonzero polynomials with degrees $D_i=\deg(P_i)$.
Let $\mathbf{x}_1,\mathbf{x}_2\stackrel{\mathrm{i.i.d.}}{\sim}\mathcal{D}$ and assume that there exists a nonempty open set $U\subset\mathbb{R}^d$ such that $P(\mathbf{x}\in U)=1$ and $\mathcal{D}$ admits a density on $U$.
Define
\[
d_{P_i}(\mathbf{x}_1,\mathbf{x}_2)=
\deg_{\alpha} P_i\bigl(\alpha\mathbf{x}_1+(1-\alpha)\mathbf{x}_2\bigr).
\]
Then for i.i.d.\ pairs $(\mathbf{x}_1^{(j)},\mathbf{x}_2^{(j)})_{j=1}^n$,
the empirical averages
\[
\widehat d_n(P_i)=\frac{1}{n}\sum\nolimits_{j=1}^n d_{P_i}\left(\mathbf{x}_1^{(j)},\mathbf{x}_2^{(j)}\right)
\]
satisfy: if $D_1>D_2$, then $\widehat d_n(P_1)>\widehat d_n(P_2)$ for all sufficiently large $n$ almost surely.
\end{theorem}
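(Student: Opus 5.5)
The plan is to show that for each nonzero polynomial $P$ of degree $D$, the restricted degree $d_P(\mathbf{x}_1,\mathbf{x}_2)$ equals $D$ almost surely. The empirical averages then satisfy $\widehat d_n(P_1)=D_1$ and $\widehat d_n(P_2)=D_2$ almost surely for every $n$, which gives the strict ordering at once. In particular, no law of large numbers is needed beyond this almost-sure identity; if one only had $\mathbb{E}[d_{P_1}]>\mathbb{E}[d_{P_2}]$, the strong law would still yield the claim for all large $n$.

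\textbf{Step 1: identify the leading coefficient in $\alpha$.} Write $\mathbf{x}(\alpha)=\mathbf{x}_2+\alpha\mathbf{v}$ with $\mathbf{v}=\mathbf{x}_1-\mathbf{x}_2$. Decompose $P=\sum_{k=0}^{D}H_k$ into homogeneous parts with $H_D\neq 0$. Expanding $P(\mathbf{x}_2+\alpha\mathbf{v})$ in powers of $\alpha$, the coefficient of $\alpha^D$ is exactly $H_D(\mathbf{v})$, since lower-order homogeneous parts contribute only degrees below $D$. Moreover, no power exceeding $D$ can appear. Hence $d_P(\mathbf{x}_1,\mathbf{x}_2)=D$ if and only if $H_D(\mathbf{x}_1-\mathbf{x}_2)\neq 0$, and otherwise $d_P<D$.

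\textbf{Step 2: show the degeneracy set is null.} Consider $Q(\mathbf{x}_1,\mathbf{x}_2)=H_D(\mathbf{x}_1-\mathbf{x}_2)$ as a polynomial on $\mathbb{R}^{2d}$. It is not identically zero, because $H_D\neq 0$ and $(\mathbf{x}_1,\mathbf{x}_2)\mapsto\mathbf{x}_1-\mathbf{x}_2$ is surjective. By the standard fact that the zero set of a nonzero polynomial has Lebesgue measure zero (proved by induction on dimension via Fubini), $Z=\{Q=0\}$ is Lebesgue-null in $\mathbb{R}^{2d}$.

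The pair $(\mathbf{x}_1,\mathbf{x}_2)$ lies in $U\times U$ almost surely and has a product density there. Its law is therefore absolutely continuous with respect to Lebesgue measure, so $\Pr((\mathbf{x}_1,\mathbf{x}_2)\in Z)=0$. The openness of $U$ is not strictly needed for this step; the density assumption suffices. Openness does guarantee that the support is rich enough for $Q$ not to vanish identically on it, which is consistent with the argument.

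\textbf{Step 3: conclude.} Take a countable union over $j$ and over $i\in\{1,2\}$ of null events. Almost surely, $d_{P_i}(\mathbf{x}_1^{(j)},\mathbf{x}_2^{(j)})=D_i$ for all $j$ and $i$. Hence $\widehat d_n(P_1)=D_1>D_2=\widehat d_n(P_2)$ for every $n\ge1$, which is stronger than the claim.

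The main obstacle is making rigorous that the measure-zero exceptional set carries zero probability. This requires the density assumption, which must be interpreted as absolute continuity of $\mathcal{D}$ with respect to Lebesgue measure on $U$. It also requires the lemma on zero sets of nonzero polynomials, which I would cite or prove briefly by induction on the number of variables.
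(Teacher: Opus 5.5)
Your proof is correct. Its core is the same as the paper's Lemma~\ref{lem:degree_preservation}: the $\alpha^D$ coefficient of $P(\mathbf{x}_2+\alpha\mathbf{v})$ is the top homogeneous part evaluated at $\mathbf{v}=\mathbf{x}_1-\mathbf{x}_2$, and the zero set of a nonzero polynomial is Lebesgue-null.

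You depart from the paper in two places.

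\textbf{The null-set step.} The paper stays in $\mathbb{R}^d$ and shows that $\mathbf{v}$ is absolutely continuous by writing its density as the convolution $g(\mathbf{v})=\int f(\mathbf{u})f(\mathbf{u}-\mathbf{v})\,d\mathbf{u}$. You instead lift to $\mathbb{R}^{2d}$ and apply the null-set fact to $(\mathbf{x}_1,\mathbf{x}_2)\mapsto H_D(\mathbf{x}_1-\mathbf{x}_2)$ under the product density. Both are one-line arguments; yours avoids the convolution.

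\textbf{The conclusion.} This difference matters more. The paper passes through $\mathbb{E}[d_{P_i}]=D_i$ and the strong law of large numbers, which gives the ordering only for sufficiently large $n$. You observe that $d_{P_i}=D_i$ almost surely, combined with a countable union over samples, forces $\widehat d_n(P_i)=D_i$ exactly for every $n$. So the ordering holds for all $n\ge 1$, and the SLLN step is superfluous under the stated hypotheses. As you note, the paper's SLLN route is the one that would survive a weakening to $\mathbb{E}[d_{P_1}]>\mathbb{E}[d_{P_2}]$ with degree drops of positive probability. Your route gives the sharper statement here.

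Your treatment also covers $D_2=0$ without a separate case. The paper's lemma is stated only for $D\ge 1$, so it leaves that trivial case implicit.
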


\begin{proof}[Proof sketch]
This result follows from Lemma~\ref{lem:degree_preservation}, which states that
under these assumptions, the interpolation direction $\mathbf{x}_1-\mathbf{x}_2$ avoids the Lebesgue-null zero set of the leading homogeneous part $P_D$ almost surely, and the univariate restriction $\alpha\mapsto P(\alpha\mathbf{x}_1+(1-\alpha)\mathbf{x}_2)$ preserves degree.
The key ingredient is the classical fact that the zero set of a nonzero polynomial has Lebesgue measure zero~\citep{basu2006algorithms,mityagin2015zero}.
See Appendix~\ref{appsubsec:proof_theorem_complexity} for the full proof.
\end{proof}

\textbf{Effective degree.}
While Theorem~\ref{thm:path_preserve_complexity} justifies path-based dimensionality reduction from the perspective of algebraic degree, directly using it as a practical complexity metric is often brittle:
it is sensitive to small coefficient perturbations and may be dominated by negligible high-order components.
Motivated by this, we define our simplicity metric with a coefficient-weighted surrogate, which we call the \emph{effective degree}.
Concretely, let a fitted univariate polynomial surrogate be represented in its chosen polynomial basis as
\begin{equation}
    P(\alpha)=\sum\nolimits_{k=0}^{K} c_k\phi_k(\alpha),
    \label{eq:univar_basis}
\end{equation}
where $\{\phi_k\}_{k=0}^K$ is the basis.
We define the effective degree in terms of the fitted coefficients $\{c_k\}$.

\begin{definition}[Effective degree]
\label{def:effective_degree}
Given \cref{eq:univar_basis}, the \emph{\longtermnocaps (\shortterm)} of $P$ is
\begin{equation}
    \mathrm{\shortterm}(P) \coloneqq \sum\nolimits_{k=0}^{K} |c_k|\cdot k,
\end{equation}
and the normalized version is
\begin{equation}
    \mathrm{\shortterm}_{\mathrm{norm}}(P)
    \coloneqq
    \frac{\sum_{k=0}^{K} |c_k|\,k}{\sum_{k=0}^{K} |c_k|}.
\end{equation}
\end{definition}

For vector-valued surrogates $\mathbf{P}=(P_1,\dots,P_m)$, we set
$\mathrm{\shortterm}(\mathbf{P})=\frac{1}{m}\sum_{j=1}^m \mathrm{\shortterm}(P_j)$ (and similarly for $\mathrm{\shortterm}_{\mathrm{norm}}$).
Unlike algebraic degree, \shortterm is Lipschitz in the fitted coefficients, hence more robust to fitting noise. Note that the specific choice of absolute-value weighting here is not unique; we also tested alternative variants such as quadratic weighting and found that the current definition performs best empirically. We posit that this is due to the absolute-value weighting naturally introducing a desirable scale-invariance of the gradient with respect to coefficient magnitude.

\textbf{Output reduction.}
When the output of $f$ is high-dimensional, fitting a separate surrogate for each coordinate can be costly and statistically noisy.
We therefore employ a generic, optional output-compression step before polynomial fitting: we learn a low-dimensional linear subspace that captures most output variation on the sampled path points, and fit polynomials only in this subspace.
In particular, we apply PCA \citep{hotelling1933analysis} to the outputs sampled along each specific interpolation path, retain the top $m$ components, and fit polynomials to the resulting projected outputs, resulting in a \emph{per-path} local dimensionality reduction.

\textbf{Numerical stability.}
To make coefficient estimation numerically stable, we choose an \emph{orthogonal polynomial basis}. Orthogonality improves conditioning and reduces coefficient instability~\citep{trefethen2019approximation}.
In particular, we use Chebyshev polynomials, defined recursively by
$T_0(x)=1,\quad T_1(x)=x,\quad T_{k+1}(x)=2xT_k(x)-T_{k-1}(x),$
and we fit expansions of the form $P(\alpha)\approx \sum_{k=0}^{K} c_k T_k(2\alpha-1)$.
Chebyshev fitting is typically paired with sampling at (shifted) Chebyshev nodes on $[0,1]$:
\begin{equation}
    \alpha_i=\frac{1}{2}\left(1-\cos\frac{(2i-1)\pi}{2r}\right),\quad i=1,\dots,r,
    \label{eq:cheb_nodes}
\end{equation}
which clusters points near the endpoints and mitigates Runge-type instabilities.
In addition, we propose a randomized variant that retains this geometry while providing stratified randomness, which is helpful when $r$ is small and we average over many paths.

\begin{definition}[Randomized cosine sampling]
\label{def:cos_sample}
Let $r$ be the sampling resolution.
We draw
\begin{equation}
\theta_i \sim U\!\left[\frac{(i-1)\pi}{r}, \frac{i\pi}{r}\right], \quad i=1,\dots,r,
\label{eq:cos_sample}
\end{equation}
and map to $[0,1]$ by
$
\alpha_i = \tfrac{1}{2}\bigl(1 - \cos \theta_i\bigr).
$
\end{definition}
Randomized cosine sampling can be viewed as a stratified version of sampling according to the Chebyshev measure, which is known to improve the stability of polynomial least-squares fitting~\citep{cohen2013stability}.

\textbf{Estimator.}
Let $P_{\mathbf{x}_1,\mathbf{x}_2}$ denote the fitted degree-$K$ univariate polynomial surrogate to
$g_{\mathbf{x}_1,\mathbf{x}_2}(\alpha)=f(\mathbf{x}(\alpha))$ using a sampling scheme and the chosen orthogonal basis.
We estimate network simplicity by averaging effective degrees over random interpolation paths:
\begin{equation}
\widehat{\mathrm{\shortterm}}(f)
=
\mathbb{E}_{\mathbf{x}_1,\mathbf{x}_2 \sim \mathcal{D}}
\left[
\mathrm{\shortterm}\!\left(P_{\mathbf{x}_1,\mathbf{x}_2}\right)
\right].
\label{eq:ed_estimator}
\end{equation}
In practice, we approximate these expectations with finite samples of pairs and path points.


\section{Effective Degree Tracks Generalization}
\label{sec:generalization_prediction}

\begin{figure*}[t]
	\centering
	\includegraphics[width=\textwidth]{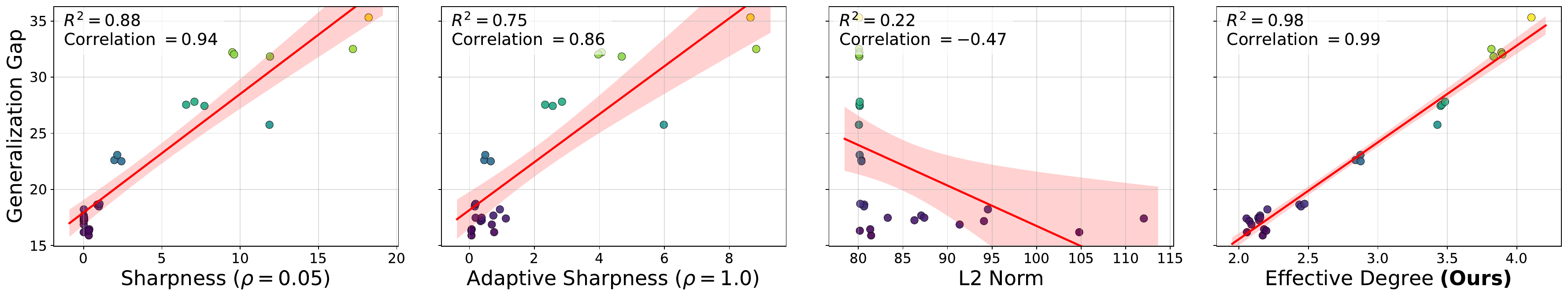}\vspace{-0.2em}
	\caption{Correlations between \longterm, sharpness-based measures, and parameter $L_2$ norm with generalization gap for ResNet18 on CIFAR-10. Effective degree exhibits the strongest linear correlation.
	Points with lighter colors represent models with larger generalization gaps (same for other figures).
		Solid red lines indicate least-squares linear fits with 95\% confidence intervals.
	}\vspace{-0.4em}
	\label{fig:resnet_all}
\end{figure*}

\begin{figure*}[t]
	\centering
	\includegraphics[width=\textwidth]{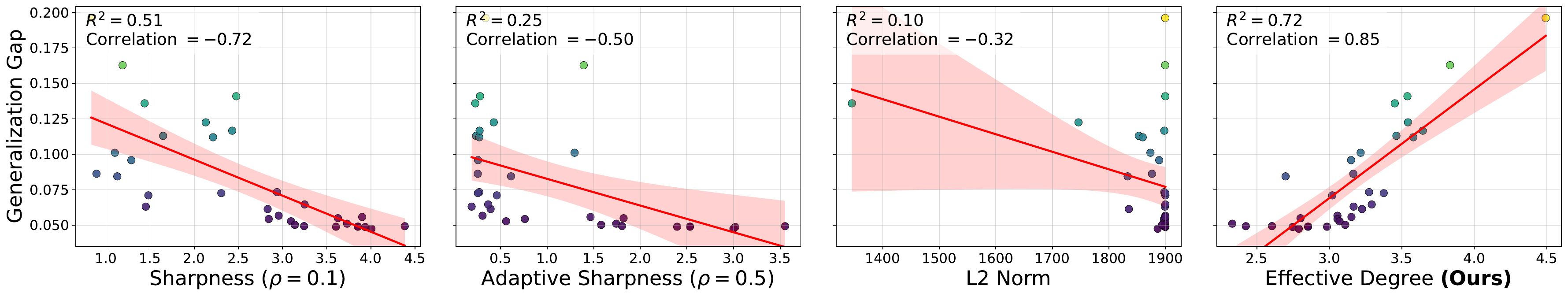}\vspace{-0.2em}
	\caption{
		Correlations between \longterm, sharpness-based measures, and parameter $L_2$ norm with generalization gap for CLIP ViT-B/32 fine-tuned on ImageNet.
		Effective degree exhibits a positive correlation with the generalization gap, whereas all other measures correlate negatively. Solid red lines indicate least-squares linear fits with 95\% confidence intervals.\vspace{-1.0em}
	}
	\label{fig:imagenet_clip_mixup_active}
\end{figure*}

In this section, we show that \longterm (\shortterm) tracks generalization performance and compares favorably with existing generalization proxies. Specifically, we observe a strong correlation between \shortterm and generalization gap across image classification benchmarks, and find that it accurately captures phase transitions in grokking. Finally, we discuss the relation between \shortterm and standard generalization bounds.

\subsection{\shortterm Correlates with Generalization Gap}
\label{subsec:correlation_gap}

We study Pearson correlation between \shortterm and generalization gap (train minus test accuracy) across different training configurations for ResNets~\citep{he_deep_2016} / ViTs~\citep{dosovitskiy_image_2021} on CIFAR-10~\citep{krizhevsky_learning_2009} and fine-tuned CLIP~\citep{radford_learning_2021} on ImageNet~\citep{deng_imagenet:_2009}.
We compare \shortterm to sharpness~\citep{foret_sharpness-aware_2021}, adaptive sharpness~\citep{kwon_asam_2021}, and parameter $L_2$ norm, reporting the best correlation over metric variants (raw vs.\ normalized \shortterm; radius sweeps for sharpness). See \cref{app:exp_correlation} for details.



\textbf{CIFAR-10.}
We use ResNet18 and ViT-Tiny as representative neural network architectures.
For each architecture, we train models across 27 distinct hyperparameter configurations and report results averaged over three random seeds for each configuration.
As illustrated in \cref{fig:resnet_all}, on ResNet18, \shortterm shows the strongest correlation with the gap and consistently outperforms both sharpness variants, while the parameter $L_2$ norm correlates negatively.
ViT-Tiny results show similar trends and are detailed in \cref{app:vit_corr}.


\textbf{ImageNet.}
We analyze fine-tuned CLIP ViT-B/32 models from~\citet{wortsman_model_2022} under multiple recipes.
Because the fine-tuning recipe such as mixup~\citep{zhang_mixup_2018} systematically shifts both the generalization gap distribution and the scale of metrics, we report correlations in a recipe-stratified manner to avoid confounding.
Under mixup (\cref{fig:imagenet_clip_mixup_active}), \shortterm correlates positively with the gap, whereas sharpness-based measures correlate \emph{negatively}, consistent with prior work~\citep{andriushchenko2023modern,silva_hide_2025}; the parameter norm is only weakly correlated.
Non-mixup recipes show the same qualitative relationship between \shortterm and generalization gap; see \cref{app:imagenet_corr}.

\subsection{\shortterm Tracks Grokking Transitions}
\label{subsec:grokking}

We also study grokking~\citep{power_grokking_2022} on Modular Division over $\mathbb{Z}_{97}$ with a 30\% training split, where models first memorize and only later transition to a generalizing solution. We investigate whether \shortterm can capture this delayed transition compared to established generalization proxies. See \cref{app:exp_grokking} for detailed experimental protocols.


\textbf{\shortterm aligns with the grokking transition.}
We track \shortterm alongside parameter $L_2$ norm, sharpness, and adaptive sharpness (best over $\rho$, shown with $\rho=0.05$).
As shown in~\cref{fig:grokking}, \shortterm rises during memorization, peaks near the drop in validation loss, and then decreases, suggesting the eventual generalizing solution is simpler under our metric.
In contrast, the parameter norm increases monotonically and sharpness measures either decay early or fluctuate, providing a less clear transition signal.


\begin{figure}[t]
	\centering
	\includegraphics[width=\linewidth]{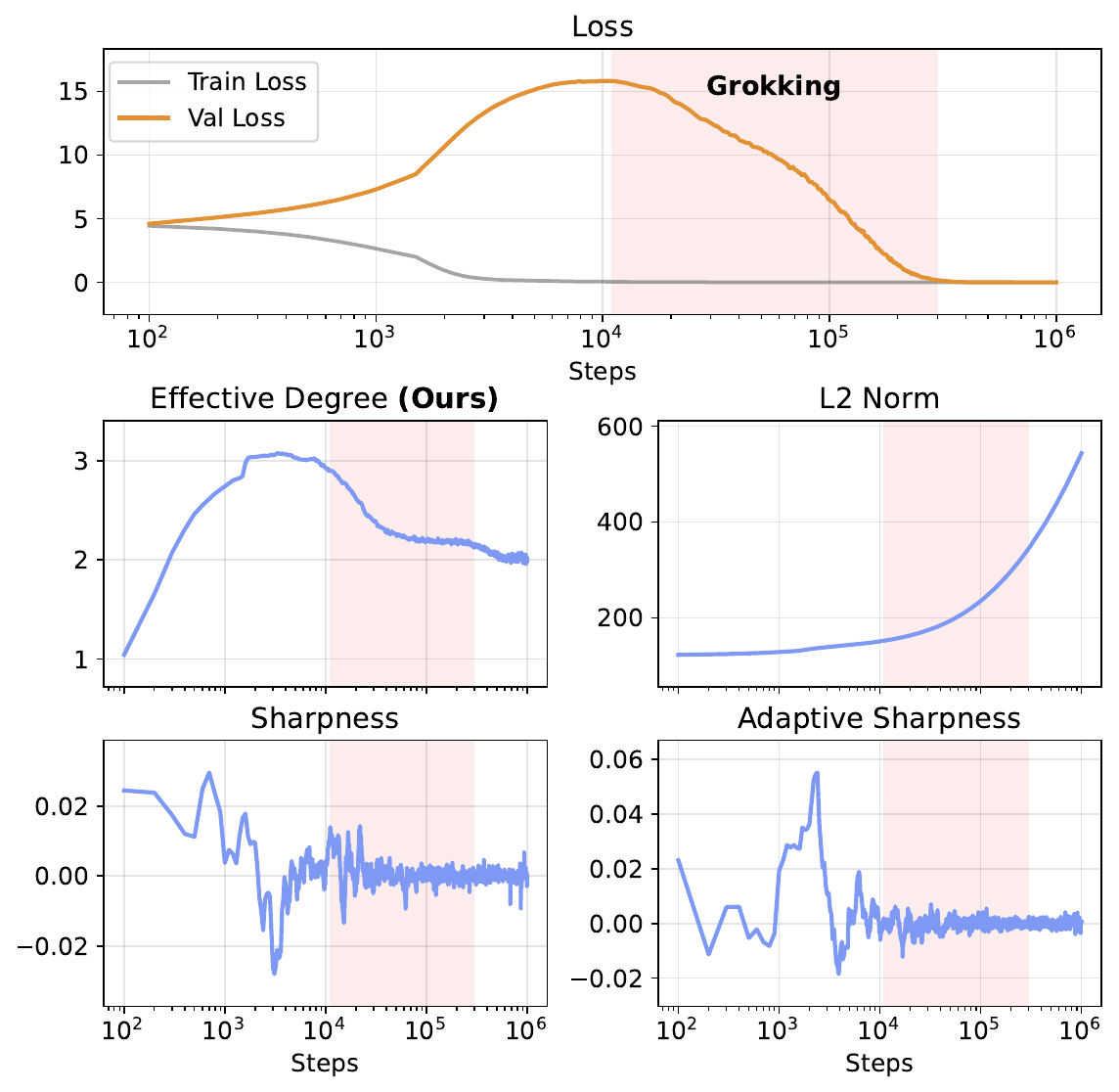}\vspace{-0.5em}
	\caption{
Tracking grokking dynamics.
\textbf{Top panel:} validation loss.
\textbf{Bottom four panels:} \longterm versus baselines; only \longterm peaks at the transition and decreases thereafter.
}
	\label{fig:grokking}
\end{figure}

\subsection{Connecting \shortterm to Generalization Bounds}
\label{subsec:connetcing_ed_gen_bounds}

A classical sanity check for degree-based simplicity is that low-degree polynomials form a low-capacity hypothesis class in standard learning theory.
For multivariate polynomials of total degree at most $K$ in $d$ variables, the number of monomials is
$M=\binom{d+K}{K}$, and the resulting class can be viewed as a linear predictor in an $M$-dimensional monomial feature space, yielding standard uniform-convergence generalization guarantees whose dependence worsens as $K$ grows; see, e.g., \citep{shalev-shwartz_understanding_2014}.
Notably, ED resembles a weighted $\ell_1$ constraint on polynomial coefficients; for linear predictors with bounded $\ell_1$ norm and bounded features, classical Rademacher complexity bounds scale only logarithmically with the feature dimension, suggesting that concentrating coefficient mass on lower degrees, as encouraged by smaller ED, corresponds to tighter capacity control~\citep{bartlett_rademacher_2002}. However, classical uniform-convergence and Rademacher bounds have been known to be \emph{vacuous} for deep neural networks~\citep{dziugaite_computing_2017,nagarajan_uniform_2019}, for which we show that \shortterm still serves as an effective generalization proxy.

For special model classes such as polynomial neural networks (PNNs), the connection between ED and existing proxies for simplicity or generalization could be made clearer. In particular, we include a controlled study on PNNs in \cref{app:ablation_pnn}, where \longterm preserves the ground-truth algebraic-degree ordering of polynomial target functions, which remains stable under basis changes and PCA-based output reduction.

\section{Effective Degree Regularization}
\label{sec:ed_regularization}

In this section, we show that \longterm (\shortterm) admits analytic gradients through the polynomial-fitting procedure, enabling end-to-end optimization of functional simplicity as an explicit regularizer.

\subsection{Differentiability of \shortterm and Stable Implementation}
\label{subsec:diff_ed}

To use \shortterm as a training-time regularizer, we need to differentiate it with respect to network outputs (and hence parameters) through the polynomial fitting step.
Concretely, along each sampled interpolation path we fit Chebyshev coefficients by least squares and compute \shortterm as a weighted $\ell_1$ function of these coefficients.
Both steps admit closed-form gradients.
The following proposition provides the resulting gradient with respect to the sampled outputs.

\begin{proposition}[Differentiability of \longterm]
\label{prop:diff_ed}
Let $\mathbf{y}\in\mathbb{R}^{r}$ denote model outputs sampled at $r$ nodes $\{\alpha_i\}_{i=1}^r$ along an interpolation path, and let $\mathbf{c}=[c_0,\dots,c_{K}]^\top$ be the coefficients of the degree-$K$ Chebyshev least-squares fit.
Define $\mathbf{d}=[0,\dots,K]^\top$ and the design matrix $\mathbf{T}\in\mathbb{R}^{r\times(K+1)}$ by
\[
\mathbf{T}_{i,k} \;=\; T_k(2\alpha_i-1),\ i=1,\dots,r,\; k=0,\dots,K,
\]
where $T_k$ is the $k$-th Chebyshev basis function.
Then, whenever $\mathbf{T}^\top\mathbf{T}$ is invertible and $c_k\neq 0$ for all $k$,
\begin{equation}
\frac{\partial\,\mathrm{\shortterm}}{\partial \mathbf{y}}
\;=\;
\mathbf{T}(\mathbf{T}^\top\mathbf{T})^{-1}\bigl(\mathrm{sign}(\mathbf{c})\odot \mathbf{d}\bigr),
\end{equation}
where $\odot$ is the element-wise product.
\end{proposition}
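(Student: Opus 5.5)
The plan is a chain-rule argument through two maps: first $\mathbf{y}\mapsto\mathbf{c}$ (the least-squares fit), then $\mathbf{c}\mapsto\mathrm{ED}$ (the weighted $\ell_1$ functional of Definition~\ref{def:effective_degree}).

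\emph{Step 1: the fit is linear in $\mathbf{y}$.} The coefficients minimize $\|\mathbf{T}\mathbf{c}-\mathbf{y}\|_2^2$. When $\mathbf{T}^\top\mathbf{T}$ is invertible, the normal equations give the unique minimizer
\[
\mathbf{c}=(\mathbf{T}^\top\mathbf{T})^{-1}\mathbf{T}^\top\mathbf{y}.
\]
This map is linear in $\mathbf{y}$, hence smooth, with Jacobian $\partial\mathbf{c}/\partial\mathbf{y}=(\mathbf{T}^\top\mathbf{T})^{-1}\mathbf{T}^\top\in\mathbb{R}^{(K+1)\times r}$. The nodes $\alpha_i$ are treated as fixed, so $\mathbf{T}$ does not depend on $\mathbf{y}$.

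\emph{Step 2: differentiate the effective degree in $\mathbf{c}$.} We have $\mathrm{ED}(\mathbf{c})=\sum_k k|c_k|=\mathbf{d}^\top|\mathbf{c}|$. Under the hypothesis $c_k\neq0$ for all $k$, each $|c_k|$ is differentiable at $\mathbf{c}$ with derivative $\mathrm{sign}(c_k)$. Therefore
\[
\nabla_{\mathbf{c}}\mathrm{ED}=\mathrm{sign}(\mathbf{c})\odot\mathbf{d}.
\]
The $k=0$ term carries weight $0$, so the hypothesis is only needed for $k\ge1$. I will remark on this but keep the statement as given.

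\emph{Step 3: chain rule.} Since $\mathbf{y}\mapsto\mathbf{c}$ is linear and the composite is differentiable at the point,
\[
\frac{\partial\mathrm{ED}}{\partial\mathbf{y}}=\Bigl(\frac{\partial\mathbf{c}}{\partial\mathbf{y}}\Bigr)^{\!\top}\nabla_{\mathbf{c}}\mathrm{ED}
=\mathbf{T}(\mathbf{T}^\top\mathbf{T})^{-\top}\bigl(\mathrm{sign}(\mathbf{c})\odot\mathbf{d}\bigr).
\]
Because $\mathbf{T}^\top\mathbf{T}$ is symmetric, so is its inverse, and this reduces to the claimed formula.

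There is no real obstacle here. The only point requiring care is justifying differentiability at the given $\mathbf{y}$. This holds because $\mathbf{c}(\mathbf{y})$ is continuous, so a small neighbourhood of $\mathbf{y}$ maps into the open orthant where every $\mathrm{sign}(c_k)$ is constant. On that neighbourhood ED is linear in $\mathbf{y}$, which makes the gradient exact rather than merely a subgradient.

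If ED is averaged over PCA components or over multiple paths, the formula applies per component and per path by linearity. The PCA basis itself depends on the outputs, but differentiating through it is outside the scope of the proposition, since there $\mathbf{y}$ is already the projected output.
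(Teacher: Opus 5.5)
Your proposal is correct and follows the same route as the paper. Like the paper, you solve the normal equations to get the linear Jacobian $(\mathbf{T}^\top\mathbf{T})^{-1}\mathbf{T}^\top$, differentiate the weighted $\ell_1$ functional to get $\mathrm{sign}(\mathbf{c})\odot\mathbf{d}$, and compose via the chain rule. Your extra remarks are valid refinements that the paper leaves implicit: the symmetry of $(\mathbf{T}^\top\mathbf{T})^{-1}$, the local constancy of the sign pattern that makes the gradient exact, and the fact that the $k=0$ term carries zero weight.
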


The proof is deferred to Appendix~\ref{appsubsec:proof_prop_diff}.


While Proposition~\ref{prop:diff_ed} provides the analytical gradient, direct computation involves the inversion of the matrix $\mathbf{T}^\top\mathbf{T}$, which can be numerically unstable when $\mathbf{T}$ is ill-conditioned under stochastic sampling.
We therefore solve the damped normal equations to obtain the regularized coefficients $\mathbf{c}_\epsilon$:
\begin{equation}
\label{eq:regularized_system}
(\mathbf{T}^\top\mathbf{T} + \epsilon \mathbf{I}) \mathbf{c}_\epsilon = \mathbf{T}^\top \mathbf{y},
\end{equation}
with a small damping factor $\epsilon>0$.
We can then compute $\mathbf{c}_\epsilon$ via a linear solver,
\begin{equation}
\mathbf{c}_\epsilon = \texttt{LinearSolve}(\mathbf{T}^\top\mathbf{T} + \epsilon \mathbf{I},\, \mathbf{T}^\top \mathbf{y}),
\end{equation}
which remains differentiable in modern autodiff frameworks (e.g., PyTorch~\citep{paszke_pytorch_2019}) and avoids explicit matrix inversion. In practice, we use LU-based linear solves provided by PyTorch. The overhead is modest because each fit is only a small per-path linear solve.
The explicit gradient derivation for this damped implementation is provided in Appendix~\ref{appsubsec:proof_prop_diff}.

\subsection{\shortterm Regularization Objective}
\label{subsec:ed_objective}

We add \shortterm as a minibatch complexity penalty.
Given a minibatch $\mathcal{B}=\{(x_b,t_b)\}_{b=1}^B$, we sample $n_p$ input pairs $\{(x_1^{(i)}, x_2^{(i)})\}_{i=1}^{n_p}$ from $\mathcal{B}$ and evaluate the model along $r$ nodes $\{\alpha_\ell\}_{\ell=1}^r$ either from Chebyshev nodes or randomized cosine sampling.
For each pair we fit a degree-$K$ Chebyshev polynomial to the resulting 1D path and compute its \shortterm. When the output dimension is large, we first project outputs to a low-dimensional subspace via PCA before fitting, as described in~\cref{sec:poly_repr}. Note that in our implementation, PCA is performed separately for each sampled path. For the outputs collected along a given path, we compute a path-specific PCA projection, project the path samples into the corresponding low-dimensional subspace, and then fit the polynomial surrogate in this reduced space. Gradients are computed starting from the PCA outputs and differentiated through the PCA decomposition process: after obtaining the path-specific projected representation, the ED regularization loss is computed and differentiated based on those projected coordinates. 
Averaging over sampled pairs then yields the minibatch estimator $\widehat{\mathrm{\shortterm}}_{\mathcal{B}}$.
We then optimize
\begin{equation}
\mathcal{L}(\theta;\mathcal{B})
=
\mathcal{L}_{\text{task}}(\theta;\mathcal{B})
+
\lambda\,\widehat{\mathrm{\shortterm}}_{\mathcal{B}},
\end{equation}
where $\lambda>0$ controls the regularization strength.
The full procedure is summarized in \cref{alg:shortterm_regularization} in~\cref{appsec:algo}.

\paragraph{Label-anchored \shortterm for classification.}
\label{sec:label_anchored}
For classification problems, cross-entropy encourages rapid deviation from near-constant predictions early in training, which could lead to an optimization conflict with ED.
We mitigate this issue with \emph{label-anchored \shortterm}: we replace the model predictions at the two boundary nodes with the corresponding ground-truth labels when fitting the polynomial surrogate.
For randomized cosine sampling, we fix the boundary angles to $\theta_1=0$ and $\theta_r=\pi$, and sample only the remaining $r-2$ interior nodes as in \cref{def:cos_sample}.
Note that this anchoring strategy is compatible with the output reduction described earlier; when both are employed, we apply the label replacement to the boundary nodes \emph{before} computing the PCA projection.
Unless otherwise mentioned, in supervised classification tasks we will use this variant and still refer to it as \shortterm.

\begin{table*}[t]
	\centering
	\small
    \setlength{\tabcolsep}{4pt}
	\caption{Top-1 test accuracy (\%, mean $\pm$ std over 3 seeds) on CIFAR-10. For SAM/ASAM, we report the best result over the tuned $\rho$ grids. For Jacobian regularization, we report the best result over the tuned $\lambda_{\text{JR}}$ grids (see Appendix~\ref{sec:cifar10_settings}).}
	\label{tab:cifar10_top1}
	\begin{tabular}{lccccccc}
		\toprule
		& Baseline & Mixup & SAM & ASAM & Jacobian reg. & \shortterm w/o LA (Ours) & \shortterm (Ours) \\
		\midrule
		ViT-Tiny (Aug)
		& $87.80 \pm 1.17$
		& $88.83 \pm 1.48$
		& $87.85 \pm 1.27$
		& $87.85 \pm 1.24$
		& $87.81 \pm 0.17$
		& $90.00 \pm 0.60$
		& $\mathbf{90.82 \pm 0.11}$ \\
		\bottomrule
	\end{tabular}
\end{table*}

\begin{table*}[t]
	\centering
	\setlength{\tabcolsep}{3pt}
	\small
	\caption{In-distribution (ID) and out-of-distribution (OOD) accuracy (\%, mean $\pm$ std over 3 seeds) for CLIP ViT-B/16 and ViT-B/32.}
	\label{tab:imagenet_results}
	\begin{tabular}{l|c|ccccc|c}
		\toprule
		{Method} & {ImageNet (ID)} & {ImageNetV2} & {ImageNet-R} & {ImageNet-A} & {ImageNet Sketch} & {ObjectNet} & {Avg. OOD} \\
		\midrule
		CLIP ViT-B/32 & $ 76.20 \pm 0.02 $ & $64.21 \pm 0.11$ & $56.82 \pm 0.31$ & $20.48 \pm 0.17$ & $39.08 \pm 0.07$ & $39.62 \pm 0.10$ & $44.04 \pm 0.08$  \\
		\quad + \shortterm & $\mathbf{77.14 \pm 0.05}$ & $\mathbf{65.37 \pm 0.18}$ & $\mathbf{58.28 \pm 0.09}$ & $\mathbf{22.03 \pm 0.37}$ & $\mathbf{40.46 \pm 0.24}$ & $\mathbf{40.41 \pm 0.17}$ & $\mathbf{45.31 \pm 0.08}$ \\
		\midrule
		CLIP ViT-B/16 & $81.35 \pm 0.11$ & $70.89 \pm 0.13$ & $65.32 \pm 0.10$ & $36.63 \pm 0.11$ & $45.45 \pm 0.32$ & $50.14 \pm 0.14$ & $53.69 \pm 0.04$ \\
		\quad + \shortterm & $\mathbf{82.19 \pm 0.03}$ & $\mathbf{72.04 \pm 0.27}$ & $\mathbf{66.30 \pm 0.24}$ & $\mathbf{39.81 \pm 0.66}$ & $\mathbf{47.53 \pm 0.17}$ & $\mathbf{50.76 \pm 0.11}$ & $\mathbf{55.29 \pm 0.14}$ \\
		\bottomrule
	\end{tabular}\vspace{-0.7em}
\end{table*}

\section{Empirical Analysis}
\label{sec:eval_regularization}

In this section, we evaluate \shortterm regularization across vision, language, and reinforcement learning.
Concretely,
we train ViTs from scratch on CIFAR-10 and ImageNet (\cref{subsec:image_classification_reg}), fine-tune CLIP on ImageNet (\cref{sec:CLIP_Fine-Tuning}), fine-tune BERT~\citep{devlin_bert_2018} on GLUE tasks~\citep{wang2018glue} (\cref{subsec:glue}), and regularize reinforcement learning agents on Procgen~\citep{cobbe_leveraging_2020} (\cref{subsec:procgen}).
We also include ablation studies on the hyperparameters and several design choices of ED, analyze its failure mode, and provide a computational overhead analysis (\cref{subsec:ablation}).

\subsection{Image Classification on CIFAR-10 and ImageNet}
\label{subsec:image_classification_reg}

On CIFAR-10, we train ViT-Tiny under seven strategies: a standard training baseline, \shortterm regularization with and without label anchoring (LA), mixup~\citep{zhang_mixup_2018}, sharpness-aware minimization (SAM)~\citep{foret_sharpness-aware_2021}, ASAM~\citep{kwon_asam_2021}, and Jacobian regularization~\citep{hoffman2019robust}. We also test the scalability of ED regularization by training ViT-S/16 from scratch on ImageNet, following the original ViT recipe~\citep{dosovitskiy_image_2021} and a stronger recipe from~\citep{beyer2022better} as baselines. See Appendices~\ref{sec:cifar10_settings} and~\ref{sec:imagenet_settings} for details.

\textbf{Results.}
On CIFAR-10, \Cref{tab:cifar10_top1} shows that \shortterm yields the best accuracy, improving over the baseline by +3.02 points, while SAM/ASAM/Jacobian reg. are comparable to the baseline. Notably, \shortterm without LA slightly underperforms the standard \shortterm, yet still outperforms all other methods, suggesting that the primary performance gain stems from explicit \emph{complexity control} rather than \emph{label alignment}. On ImageNet, \Cref{tab:imagenet_scratch} shows consistent gains from \shortterm under both recipes, indicating that \shortterm remains effective on larger-scale training and atop carefully tuned recipes.

\begin{figure}[t]
	\centering
	\begin{subfigure}[b]{0.235\textwidth}
		\centering
		\includegraphics[width=\linewidth]{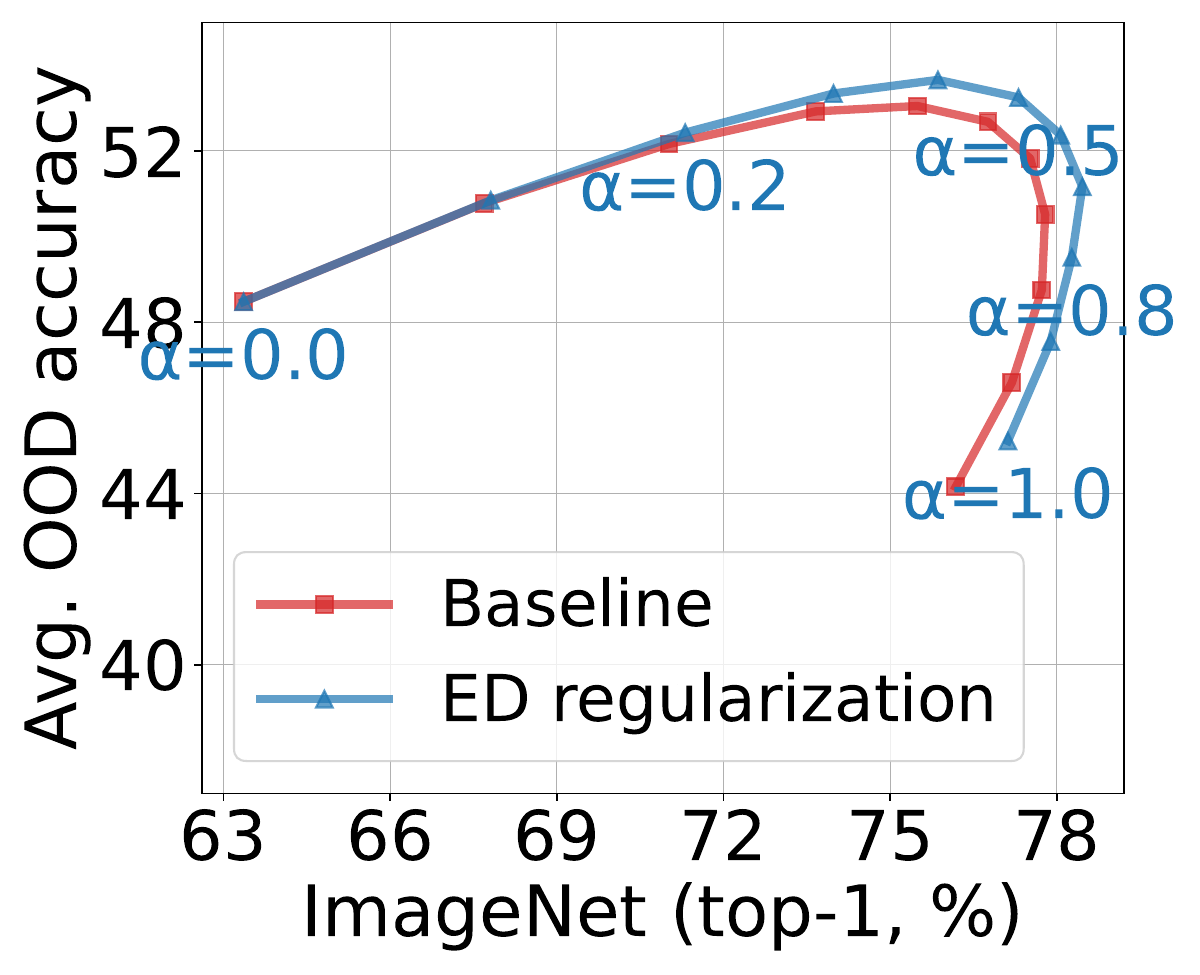}
		\caption{ViT-B/32}
		\label{fig:vitb32_curve}
	\end{subfigure}
	\hfill
	\begin{subfigure}[b]{0.235\textwidth}
		\centering
		\includegraphics[width=\linewidth]{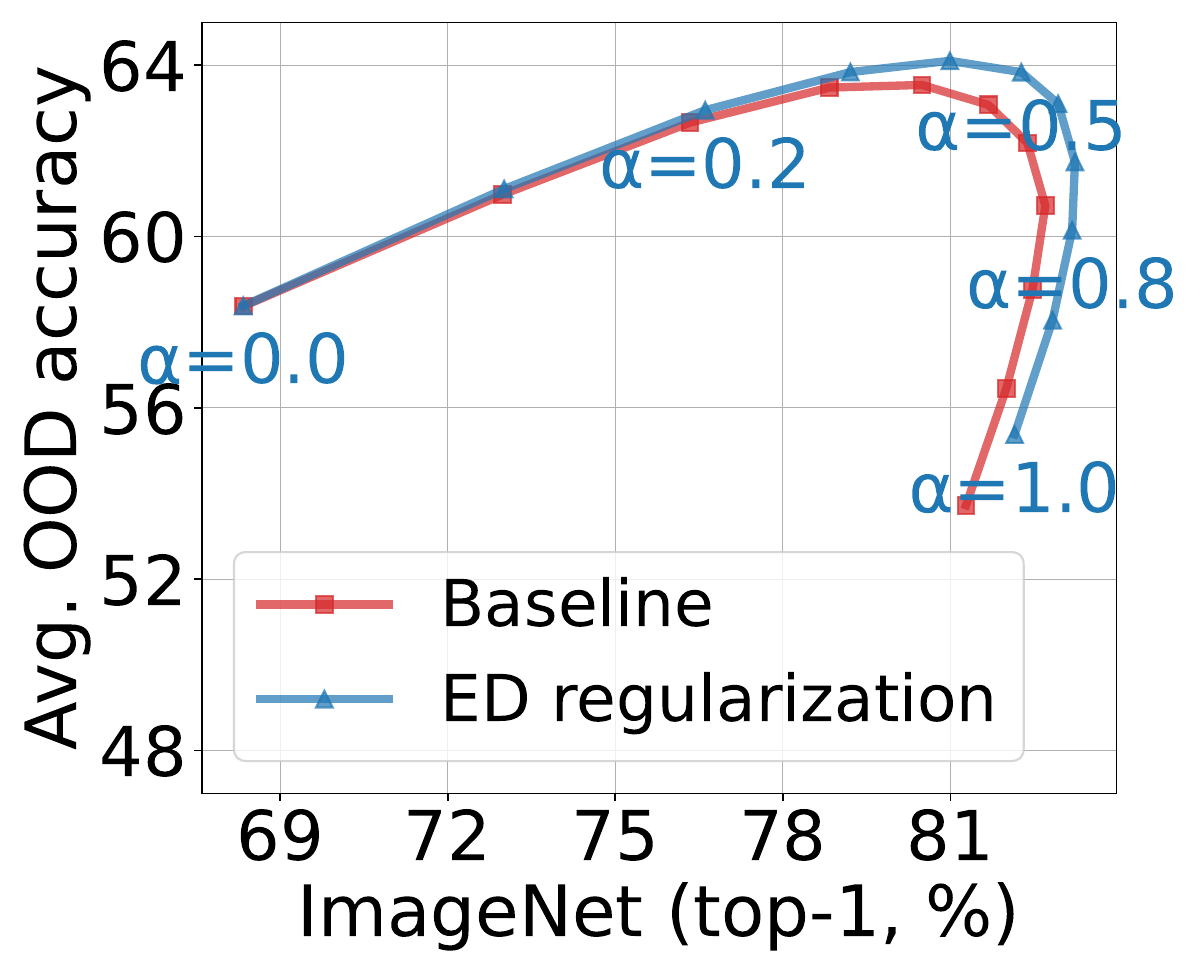}
		\caption{ViT-B/16}
		\label{fig:vitb16_curve}
	\end{subfigure}
	\caption{ImageNet (ID) accuracy vs.\ average OOD accuracy over 5 shifts under weight interpolation ($\alpha\in[0,1]$). \shortterm yields a better trade-off than standard fine-tuning across all $\alpha$.}\vspace{-1.0em}
	\label{fig:robustness_curves}
\end{figure}

\begin{table}[t]
	\centering
	\small
	\caption{Top-1 test accuracy (\%) of ViT-S/16 (mean $\pm$ std over 3 seeds) trained from scratch on ImageNet under two recipes.}
	\label{tab:imagenet_scratch}
	\begin{tabular}{lc}
		\toprule
		{Method} & {Top-1 Accuracy} \\
		\midrule
		ViT-S/16 (original recipe) & $71.37 \pm 0.17$ \\
		\quad + \shortterm & $\mathbf{72.76 \pm 0.16}$ \\
		\midrule
		ViT-S/16 (strong recipe) & $74.42 \pm 0.13$ \\
		\quad + \shortterm & $\mathbf{75.01 \pm 0.11}$ \\
		\bottomrule
	\end{tabular}\vspace{-1.0em}
\end{table}

\subsection{CLIP Fine-Tuning on ImageNet}
\label{sec:CLIP_Fine-Tuning}

\begin{figure*}[t!]
    \centering
    \begin{subfigure}[b]{0.22\textwidth}
        \centering
        \includegraphics[width=\linewidth]{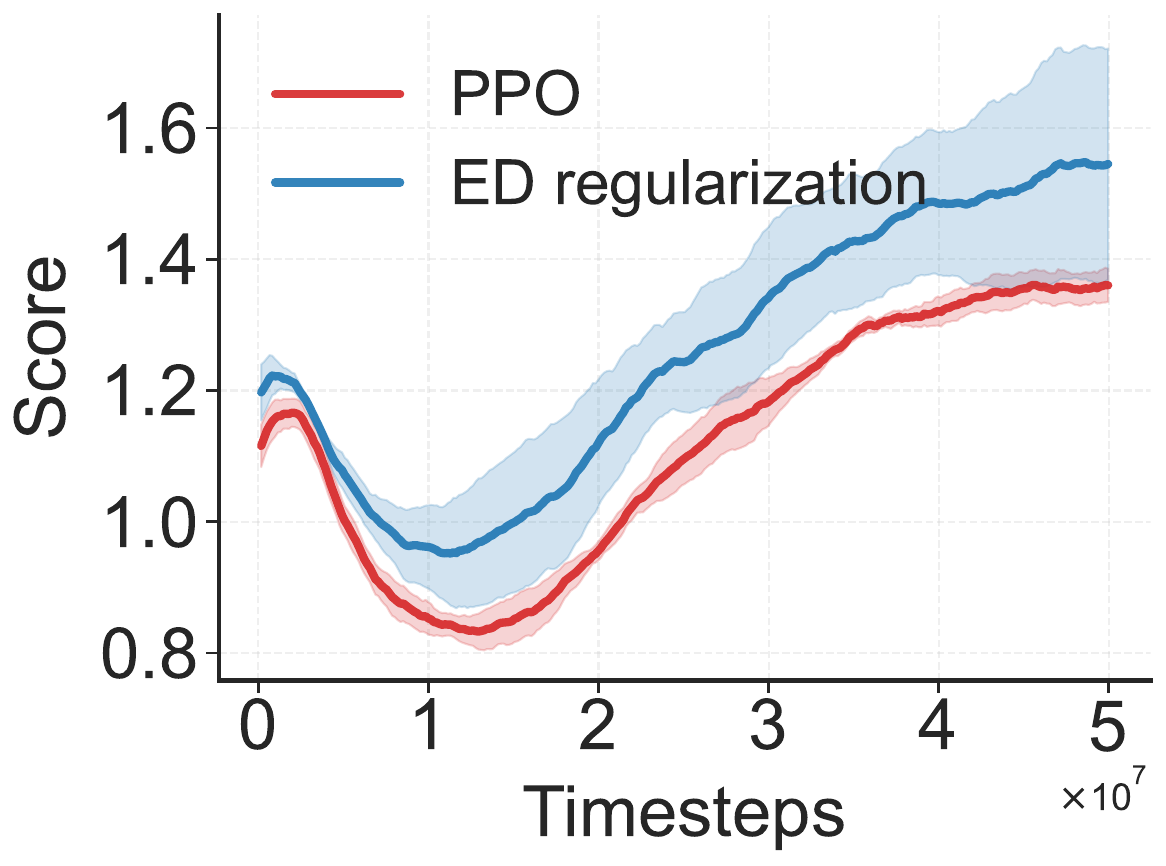}\vspace{-0.2em}
        \caption{\emph{Dodgeball}}
        \label{fig:Dodgeball}
    \end{subfigure}
    \hfill
    \begin{subfigure}[b]{0.22\textwidth}
        \centering
        \includegraphics[width=\linewidth]{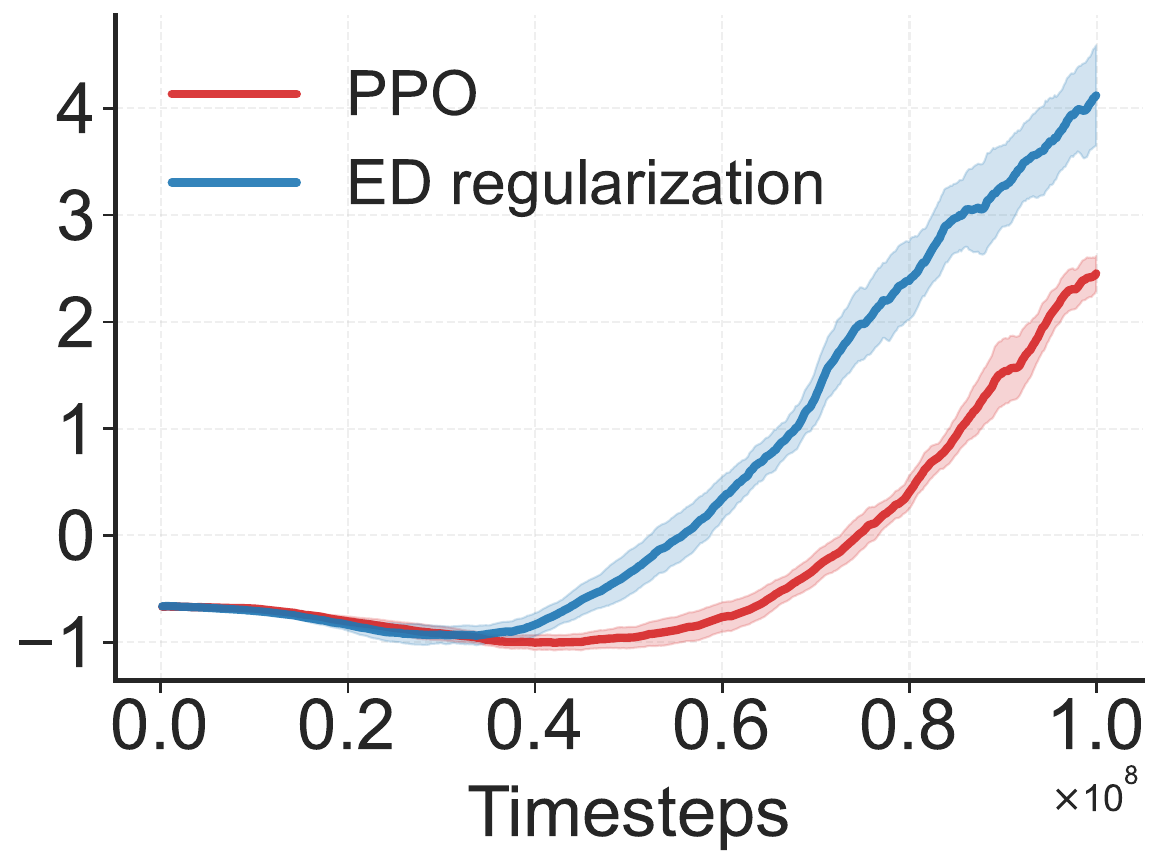}\vspace{-0.2em}
        \caption{\emph{Fruitbot}}
        \label{fig:Fruitbot}
    \end{subfigure}
    \hfill
    \begin{subfigure}[b]{0.22\textwidth}
        \centering
        \includegraphics[width=\linewidth]{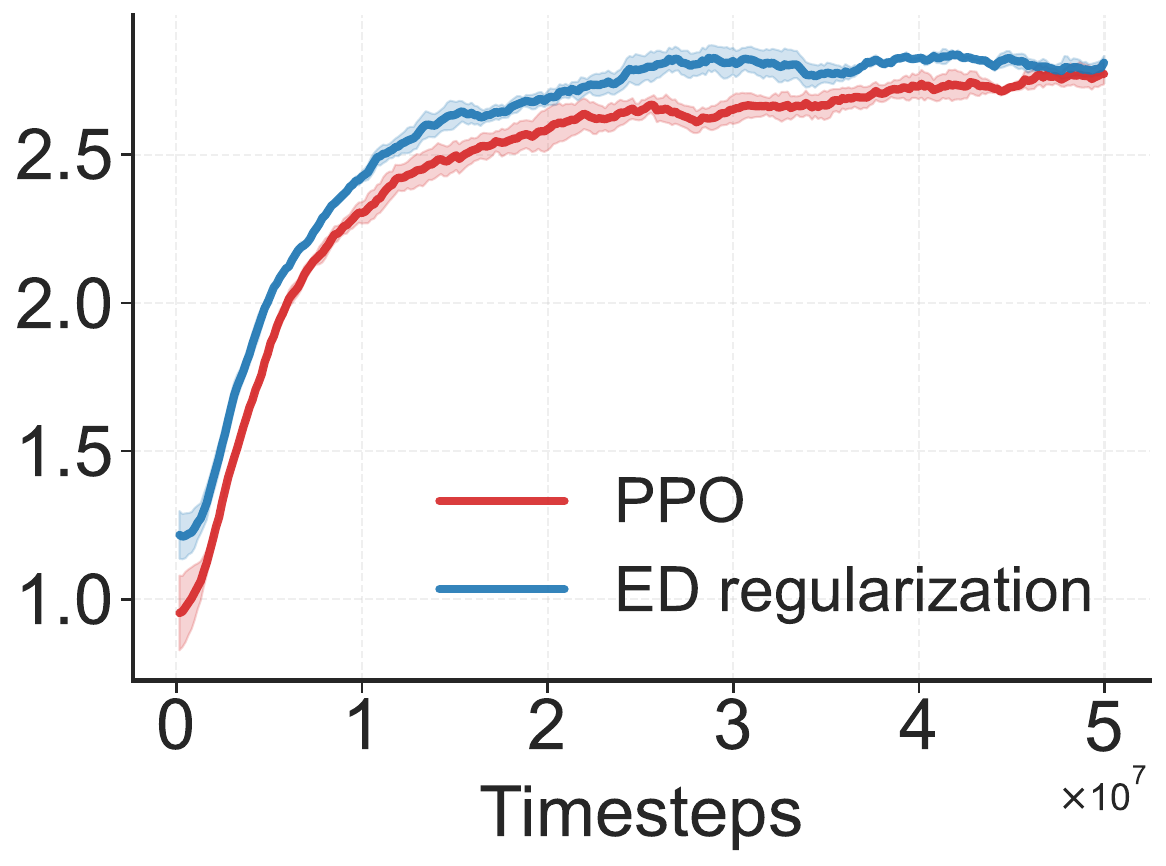}\vspace{-0.2em}
        \caption{\emph{Jumper}}
        \label{fig:Jumper}
    \end{subfigure}
    \hfill
    \begin{subfigure}[b]{0.22\textwidth}
        \centering
        \includegraphics[width=\linewidth]{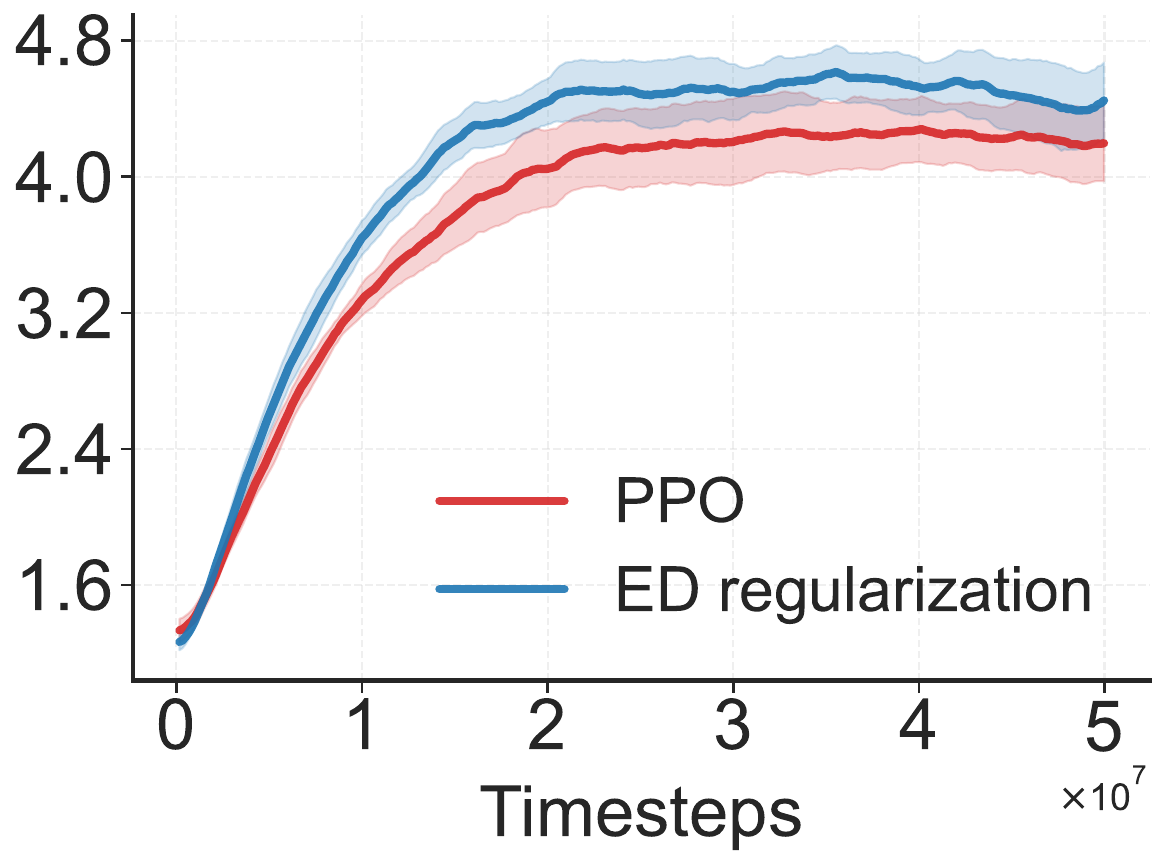}\vspace{-0.2em}
        \caption{\emph{StarPilot}}
        \label{fig:starpilot}
    \end{subfigure}\vspace{-0.2em}
    \caption{Generalization on unseen Procgen levels (averaged over 3 seeds). Shaded regions indicate standard errors of the mean.\vspace{-1.0em}}
    \label{fig:procgen_test}
\end{figure*}

We further evaluate the robustness of ED under a family of ImageNet-based distribution-shift benchmarks, including ImageNetV2~\citep{recht_imagenet_2019}, ImageNet-R~\citep{hendrycks_many_2021}, ImageNet-A~\citep{hendrycks_natural_2021}, ImageNet Sketch~\citep{wang_learning_2019}, and ObjectNet~\citep{barbu_objectnet_2019}.
Following~\citet{wortsman2022robust}, we end-to-end fine-tune CLIP ViT-B/16 and CLIP ViT-B/32 on ImageNet and apply \shortterm regularization during fine-tuning.
To incorporate a more advanced fine-tuning recipe, we also follow the weight-space ensembling strategy from \citet{wortsman2022robust} and interpolate between the weights of the zero-shot initialization ($\theta_{\text{ZS}}$) and the fine-tuned model ($\theta_{\text{FT}}$) using a mixing coefficient $\alpha \in [0, 1]$, such that $\theta_{\text{interp}} = (1-\alpha)\theta_{\text{ZS}} + \alpha\theta_{\text{FT}}$, and evaluate the performance of the interpolated model ($\theta_{\text{interp}}$).
Hyperparameters are listed in Appendix~\ref{sec:imagenet_finetune_settings}.

\textbf{Results.}
\Cref{tab:imagenet_results} summarizes the performance of \shortterm regularization compared to standard fine-tuning.
As observed, \shortterm consistently outperforms the baseline across both model architectures.
Notably, \shortterm improves performance not only on the in-distribution (ID) ImageNet validation set but also on all five \emph{out-of-distribution (OOD)} test sets.
These indicate that explicitly enforcing model simplicity enhances robustness against distribution shifts without compromising standard ID generalization capabilities.
For the weight-space ensembling setting,
\Cref{fig:robustness_curves} plots ImageNet accuracy versus average OOD accuracy across interpolation coefficients.
Across the full range, \shortterm-regularized models dominate the baseline trade-off, suggesting improved robustness.

\subsection{Text Classification on GLUE}
\label{subsec:glue}

We evaluate \shortterm regularization on three GLUE classification tasks~\citep{wang2018glue}, including two sentence-pair tasks (RTE, MRPC) and one acceptability task (CoLA), using BERT-base~\citep{devlin_bert_2018} and the standard GLUE evaluation metrics.
Unlike images, raw-token interpolation is ill-defined.
We thus construct interpolation paths in \emph{embedding space} and penalize functional complexity along these paths. Besides the standard BERT fine-tuning baseline, we also compare \shortterm with embedding mixup~\citep{guo2019augmenting}, which constructs mixed embeddings for pairs of examples and trains the classifier to match mixed labels. See Appendix~\ref{sec:nlp_settings} for training details.

\textbf{Results.}
\cref{tab:nlp_results} shows that \shortterm yields improvements across all three tasks, while embedding mixup is not consistently beneficial and can degrade performance. We posit that this is because enforcing \emph{linear} targets on the interpolation path as done by mixup is too stringent for language data. In contrast, \shortterm uses interpolation only to probe and penalize functional complexity rather than to enforce synthetic targets.

\subsection{Reinforcement Learning on Procgen}
\label{subsec:procgen}

Finally, we evaluate \shortterm regularization in reinforcement learning (RL) on the Procgen benchmark~\citep{cobbe_leveraging_2020}, a suite of procedurally generated environments designed to assess generalization capability of RL agents.
We train standard CNN-based PPO agents~\citep{schulman_proximal_2017} and apply the \shortterm penalty to the actor networks of the agents. Following standard protocols, we train agents on a fixed set of 500 hard-levels (train) and evaluate them on a distinct set of unseen levels (test). We report performance for four environments, including \emph{Dodgeball, Fruitbot, Jumper}, and \emph{StarPilot}. See Appendix~\ref{sec:rl_settings} for more details.

\begin{table}[t]
	\centering
	\setlength{\tabcolsep}{3pt}
	\small
	\caption{Performance comparison on each task using BERT-base (mean $\pm$ std over 9 seeds).}
	\label{tab:nlp_results}
	\begin{tabular}{lccc}
		\toprule
		\textbf{Method} & \textbf{RTE} & \textbf{MRPC} & \textbf{CoLA} \\
		& (Accuracy) & (Accuracy) & (Matthews Corr.) \\
		\midrule
		BERT-base & $ 70.28 \pm 1.73 $ & $ 86.74 \pm 1.06 $ & $ 62.31 \pm 1.01 $ \\
		\quad +Mixup & $ 70.28 \pm 3.07 $ & $86.30 \pm 1.22$ & $61.04 \pm 0.91$ \\
		\quad +\shortterm & $ \mathbf{71.12 \pm 1.96} $ & $ \mathbf{87.66 \pm 0.90} $ & $ \mathbf{62.45 \pm 1.01} $ \\
		\bottomrule
	\end{tabular}\vspace{-1.25em}
\end{table}

\textbf{Results.}
\Cref{fig:procgen_test} shows improved generalization across all four environments with \shortterm regularization, with higher asymptotic performance in \emph{Dodgeball, Fruitbot, StarPilot}, and faster learning in \emph{Jumper}. This indicates the applicability of simplicity regularization beyond supervised learning.

\subsection{Ablation, Robustness, and Overhead Analysis} 
\label{subsec:ablation}

\textbf{Protocol.}
Implementing ED involves configuring several hyperparameters. To reduce the burden of hyperparameter tuning and demonstrate the robustness of \shortterm, we first use CIFAR-10 as a primary testbed to analyze hyperparameter sensitivity and identify a robust default hyperparameter configuration; for other experiments, we \emph{fix} this unified configuration by tuning only the regularization strength $\lambda$, as described in Appendix~\ref{app:experimental_details_reg}. 

We then conduct several ablation studies to better understand the design choices underlying \shortterm. 

\textbf{Path construction.}
To verify the necessity of constructing \emph{data-dependent} interpolation paths, 
we show in \cref{app:ablation_random_pixels} that replacing real images with random noise slightly weakens the correlation and leads to poor regularization performance, highlighting the importance of using interpolation paths \emph{near the data manifold}.

We also examine whether \shortterm is restricted to input-space interpolation. As discussed in \cref{subsec:glue}, for discrete text inputs we already construct paths in embedding space; in \cref{app:feature-space}, we additionally evaluate intermediate feature-space interpolation for ViTs on CIFAR-10 and find that \shortterm still improves performance when applied after the embedding layer or after the first Transformer block.

\textbf{Polynomial fitting choices.}
We examine the choice of polynomial basis in \cref{app:ablation_basis_choice}, showing that replacing the Chebyshev basis with the Legendre basis yields similar performance, suggesting that \shortterm is stable across orthogonal polynomial bases rather than relying on a specific basis choice. Appendix~\ref{app:sampling-strategy}
compares randomized cosine sampling, fixed Chebyshev sampling, and
uniform sampling, showing that cosine-based sampling is substantially
more stable, especially at higher polynomial degrees.

\textbf{Output compression.}
Finally, we study whether the optional PCA-based output compression
step is responsible for the gains. In \cref{app:pca-dependence}, direct
polynomial fitting on the original outputs already works well, and
moderate PCA compression to two or three dimensions gives similar
performance, suggesting that the gains of \shortterm are not mainly
driven by PCA.

\textbf{Failure mode and efficiency.}
Beyond these controlled ablations, \cref{app:failure_analysis} provides
a failure analysis showing that \shortterm may fail when simpler features are easier to exploit but less desirable for robust generalization.
In \cref{sec:efficiency_analysis}, we provide a brief computational cost analysis and show that although \shortterm regularization increases training time due to the inclusion of additional interpolation examples, the overhead remains acceptable in our experiments.

\section{Conclusion, Limitations, and Future Work}

We introduced effective degree (ED), a general, function-space metric that quantifies the simplicity of a neural network through a polynomial surrogate.
Beyond using ED as a post-hoc generalization proxy, we derive analytic gradients through the fitting procedure and develop a numerically stable implementation, enabling ED to be used as an explicit simplicity regularizer during training.

Several directions remain open.
On the theory side, it would be valuable to formalize when path-based polynomial surrogates may preserve relevant notions of functional simplicity beyond polynomial degree.
On the methodology side, we plan to study alternative bases and sampling schemes, adaptive choices of surrogate degree and resolution, and more efficient estimators that reduce the overhead of regularization at scale.
Finally, we will explore ED regularization in additional settings where generalization is brittle, such as long-horizon sequence modeling and out-of-distribution detection, and investigate how ED interacts with pre-training.


\section*{Acknowledgements}
This work was supported in part by the National Key Research and Development Program of China No. 2024YDLN0006, and in part by the National Key Research and Development Program of China under STI 2030--Major Projects No. 2021ZD0200300, and in part by the Tsinghua--Fuzhou Data Technology Joint Research Institute (Project No. JIDT2024013).

\section*{Impact Statement}
This paper presents work whose goal is to advance the field of machine learning. There are many potential societal consequences of our work, none of which we feel must be specifically highlighted here.

\bibliography{ref}
\bibliographystyle{icml2026}

\newpage
\clearpage
\appendix
\onecolumn
\crefalias{section}{appendix}
\crefalias{subsection}{appendix}

\section{Proofs of Theoretical Results}

\subsection{Proof of Theorem~\ref{thm:path_preserve_complexity}}
\label{appsubsec:proof_theorem_complexity}

We first prove the following lemma.

\begin{lemma}[Almost-sure polynomial degree preservation under single interpolation paths]
\label{lem:degree_preservation}
Let $P:\mathbb{R}^d\to\mathbb{R}$ be a nonzero multivariate polynomial of degree $D\ge 1$.
Write its homogeneous decomposition as
$P(\mathbf{x})=\sum_{k=0}^{D} P_k(\mathbf{x})$,
where each $P_k$ is homogeneous of degree $k$ and $P_D\not\equiv 0$.
Let $\mathbf{x}_1,\mathbf{x}_2\stackrel{\mathrm{i.i.d.}}{\sim}\mathcal{D}$, and consider the interpolation path
\[
\mathbf{x}(\alpha)=\alpha\mathbf{x}_1+(1-\alpha)\mathbf{x}_2,\qquad \alpha\in[0,1].
\]
Assume that there exists a nonempty open set $U\subset\mathbb{R}^d$ such that
(i) $\mathbb{P}(\mathbf{x}\in U)=1$ for $\mathbf{x}\sim\mathcal{D}$ and
(ii) $\mathcal{D}$ has a density with respect to Lebesgue measure on $U$.
Then, with probability one,
\begin{equation}
\deg_{\alpha} P(\mathbf{x}(\alpha)) = D.
\end{equation}
Equivalently, for the random variable
\[
d_P(\mathbf{x}_1,\mathbf{x}_2)\coloneqq \deg_{\alpha}\, P\bigl(\alpha\mathbf{x}_1+(1-\alpha)\mathbf{x}_2\bigr),
\]
we have
\begin{equation}
    \mathbb{P}\left(d_P(\mathbf{x}_1,\mathbf{x}_2)=D\right)=1.
\end{equation}
\end{lemma}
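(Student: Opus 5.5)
Write $\mathbf{v}=\mathbf{x}_1-\mathbf{x}_2$, so that $\mathbf{x}(\alpha)=\mathbf{x}_2+\alpha\mathbf{v}$. The plan is to compute the leading coefficient of $\alpha\mapsto P(\mathbf{x}_2+\alpha\mathbf{v})$ explicitly, and then show that it is nonzero almost surely.

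\textbf{Step 1 (algebra).} For each homogeneous part $P_k$, expand $P_k(\mathbf{x}_2+\alpha\mathbf{v})$ as a polynomial in $\alpha$. Its degree is at most $k$, and its $\alpha^k$ coefficient is $P_k(\mathbf{v})$: write $P_k$ as a sum of monomials of total degree $k$, and observe that the top power of $\alpha$ arises only from choosing $\alpha v_i$ in every factor. Summing over $k\le D$ gives
\[
P(\mathbf{x}_2+\alpha\mathbf{v})=P_D(\mathbf{v})\,\alpha^D+(\text{terms of degree}<D\text{ in }\alpha).
\]
Therefore $\deg_\alpha P(\mathbf{x}(\alpha))\le D$ always, with equality if and only if $P_D(\mathbf{x}_1-\mathbf{x}_2)\neq 0$.

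\textbf{Step 2 (measure zero).} Define $Q(\mathbf{x}_1,\mathbf{x}_2)\coloneqq P_D(\mathbf{x}_1-\mathbf{x}_2)$, a polynomial on $\mathbb{R}^{2d}$. It is not identically zero, since $Q(\mathbf{v},\mathbf{0})=P_D(\mathbf{v})$ and $P_D\not\equiv 0$. By the classical fact cited in the proof sketch of Theorem~\ref{thm:path_preserve_complexity}, the zero set $Z=\{Q=0\}\subset\mathbb{R}^{2d}$ has Lebesgue measure zero.

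\textbf{Step 3 (probability).} Because the pair $(\mathbf{x}_1,\mathbf{x}_2)$ is i.i.d., it lies in $U\times U$ almost surely. On $U\times U$ its law has the product density $p(\mathbf{x}_1)p(\mathbf{x}_2)$, so it is absolutely continuous with respect to Lebesgue measure on $\mathbb{R}^{2d}$. Hence
\[
\mathbb{P}\bigl((\mathbf{x}_1,\mathbf{x}_2)\in Z\bigr)=\int_{Z\cap(U\times U)}p\otimes p=0.
\]
Combining this with Step 1 gives $d_P=D$ almost surely.

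The only delicate point is Step 1: we must check that lower-degree homogeneous parts cannot contribute to $\alpha^D$, which holds because $P_k$ contributes $\alpha$-degree at most $k<D$, and that the $\alpha^D$ coefficient of $P_D(\mathbf{x}_2+\alpha\mathbf{v})$ is exactly $P_D(\mathbf{v})$. Both follow from homogeneity: $P_D(\mathbf{x}_2+\alpha\mathbf{v})=\alpha^D P_D(\mathbf{v}+\mathbf{x}_2/\alpha)$, which tends to $P_D(\mathbf{v})$ after dividing by $\alpha^D$ and letting $\alpha\to\infty$.

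The measure-zero fact can be taken as given. If a proof is wanted, it goes by induction on dimension via Fubini: for almost every choice of the other coordinates, the polynomial restricted to a line is a nonzero univariate polynomial and so has finitely many roots. Note that $\mathbf{v}$ itself lies in $U-U$, which is open but on which we have no direct density assumption; working with the joint density on $\mathbb{R}^{2d}$ avoids needing one.
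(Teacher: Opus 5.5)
Your proposal is correct and matches the paper's proof. Both identify the $\alpha^D$ coefficient of $P(\mathbf{x}_2+\alpha\mathbf{v})$ as $P_D(\mathbf{x}_1-\mathbf{x}_2)$, then conclude from the Lebesgue-nullity of the zero set of a nonzero polynomial plus absolute continuity. The only minor difference is in the last step: the paper computes the convolution density $g(\mathbf{v})=\int f(\mathbf{u})f(\mathbf{u}-\mathbf{v})\,d\mathbf{u}$ of the difference and applies the null-set fact to $P_D$ in $\mathbb{R}^d$, whereas you pull back to $Q(\mathbf{x}_1,\mathbf{x}_2)=P_D(\mathbf{x}_1-\mathbf{x}_2)$ on $\mathbb{R}^{2d}$ and use the product density directly, which avoids deriving the law of $\mathbf{v}$.
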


\begin{proof}[Proof of Lemma~\ref{lem:degree_preservation}]
Fix $\mathbf{x}_2\in\mathbb{R}^d$ and define the direction $\mathbf{v}=\mathbf{x}_1-\mathbf{x}_2$.
Then the path can be rewritten as $\mathbf{x}(\alpha)=\mathbf{x}_2+\alpha\mathbf{v}$, and the path restriction is the univariate polynomial
\[
q_{\mathbf{x}_2,\mathbf{v}}(\alpha)\coloneqq P(\mathbf{x}_2+\alpha\mathbf{v}).
\]
We first identify the $\alpha^D$ coefficient of $q_{\mathbf{x}_2,\mathbf{v}}$.

Using the homogeneous decomposition $P=\sum_{k=0}^D P_k$, we expand
\[
P(\mathbf{x}_2+\alpha\mathbf{v})=\sum_{k=0}^D P_k(\mathbf{x}_2+\alpha\mathbf{v}).
\]
Since $P_k$ is homogeneous of degree $k$, the highest power of $\alpha$ appearing in
$P_k(\mathbf{x}_2+\alpha\mathbf{v})$ is $\alpha^k$, and its $\alpha^k$ coefficient equals $P_k(\mathbf{v})$.
Therefore,
\[
P(\mathbf{x}_2+\alpha\mathbf{v})
=\sum_{k=0}^D\Bigl(\alpha^k P_k(\mathbf{v})+\text{(lower-order terms in $\alpha$)}\Bigr),
\]
so the coefficient of $\alpha^D$ in $q_{\mathbf{x}_2,\mathbf{v}}(\alpha)$ is exactly $P_D(\mathbf{v})$.
Hence
\[
\deg_{\alpha} P(\mathbf{x}_2+\alpha\mathbf{v})=D
\quad\Longleftrightarrow\quad
P_D(\mathbf{v})\neq 0.
\]

It remains to show $\mathbb{P}(P_D(\mathbf{v})=0)=0$ for $\mathbf{v}=\mathbf{x}_1-\mathbf{x}_2$.
Because $P_D$ is a nonzero polynomial, its zero set
$Z\coloneqq\{\mathbf{v}\in\mathbb{R}^d:\;P_D(\mathbf{v})=0\}$
has Lebesgue measure zero.

Under the assumption that $\mathbf{x}_1,\mathbf{x}_2\in U$ almost surely and that $\mathcal{D}$ has a density $f$ on $U$,
the difference $\mathbf{v}=\mathbf{x}_1-\mathbf{x}_2$ is absolutely continuous on $U-U$ with density
\[
g(\mathbf{v})=\int_{\mathbb{R}^d} f(\mathbf{u})\,f(\mathbf{u}-\mathbf{v})\,d\mathbf{u},
\]
(where $f$ is extended by $0$ outside $U$).
In particular, $\mathbf{v}$ is absolutely continuous with respect to Lebesgue measure, and thus assigns probability zero to any Lebesgue-null set.
Therefore $\mathbb{P}(\mathbf{v}\in Z)=0$, i.e., $\mathbb{P}(P_D(\mathbf{v})=0)=0$.

Combining the above yields $\deg_{\alpha}P(\mathbf{x}(\alpha))=D$ almost surely, proving the claim.
\end{proof}

Now we are ready to prove Theorem~\ref{thm:path_preserve_complexity}.

\begin{proof}
By Lemma~\ref{lem:degree_preservation}, for each $i\in\{1,2\}$ we have
$d_{P_i}(\mathbf{x}_1,\mathbf{x}_2)=D_i$ almost surely.
Therefore $\mathbb{E}[d_{P_i}]=D_i$, and in particular $D_1>D_2$ implies $\mathbb{E}[d_{P_1}]>\mathbb{E}[d_{P_2}]$.

For the empirical statement, note that $0\le d_{P_i}(\mathbf{x}_1,\mathbf{x}_2)\le D_i$, so $d_{P_i}$ is integrable.
Since the pairs $(\mathbf{x}_1^{(j)},\mathbf{x}_2^{(j)})$ are i.i.d., the strong law of large numbers gives
\[
\widehat d_n(P_i)=\frac{1}{n}\sum_{j=1}^n d_{P_i}\left(\mathbf{x}_1^{(j)},\mathbf{x}_2^{(j)}\right)
\xrightarrow{\mathrm{a.s.}}
\mathbb{E}[d_{P_i}]
=D_i.
\]
If $D_1>D_2$, then almost surely there exists $N$ such that for all $n\ge N$,
$\widehat d_n(P_1)>\widehat d_n(P_2)$, because both sequences converge almost surely to distinct limits.
\end{proof}

\subsection{Proof of Proposition~\ref{prop:diff_ed} and Gradients of Damped Least Squares}
\label{appsubsec:proof_prop_diff}

\begin{proof}
    Our first result regarding the derivative with respect to coefficients,
    \[
    \frac{\partial \mathrm{\shortterm}}{\partial \mathbf{c}} = \mathrm{sign}(\mathbf{c}) \odot \mathbf{d},
    \]
    follows from standard differentiation rules, noting that the subgradient of $|c_k|$ is $\mathrm{sign}(c_k)$ for $c_k \neq 0$.
    
    In the standard polynomial fitting process, we estimate the coefficients $\mathbf{c}$ by solving the ordinary least-squares problem
    \[
    \mathbf{c}
    =
    \arg\min_{\mathbf{c}}
    \|\mathbf{T}\mathbf{c}-\mathbf{y}\|_2^2.
    \]
    When $\mathbf{T}^{\top}\mathbf{T}$ is invertible, the normal-equation solution is
    \[
    \mathbf{c}
    =
    (\mathbf{T}^{\top}\mathbf{T})^{-1}\mathbf{T}^{\top}\mathbf{y}.
    \]
    Differentiating both sides with respect to $\mathbf{y}$ yields the Jacobian matrix:
    \[
    \frac{\partial \mathbf{c}}{\partial \mathbf{y}} = (\mathbf{T}^\top\mathbf{T})^{-1}\mathbf{T}^\top.
    \]
    Finally, applying the chain rule yields the gradient presented in Proposition~\ref{prop:diff_ed}:
    \[
    \frac{\partial \mathrm{\shortterm}}{\partial \mathbf{y}}
    =
    \left(\frac{\partial \mathbf{c}}{\partial \mathbf{y}}\right)^\top
    \frac{\partial \mathrm{\shortterm}}{\partial \mathbf{c}}
    =
    \mathbf{T}(\mathbf{T}^\top \mathbf{T})^{-1}
    \left( \mathrm{sign}(\mathbf{c}) \odot \mathbf{d} \right).
    \]
\end{proof}

\paragraph{Gradient for the stable implementation (damped least squares).}
In Section~\ref{subsec:diff_ed}, we introduce a damping factor $\epsilon > 0$ to enhance numerical stability. This corresponds to solving the regularized least squares problem (Ridge Regression), where the coefficients $\mathbf{c}_\epsilon$ satisfy:
\[
    (\mathbf{T}^\top\mathbf{T} + \epsilon \mathbf{I}) \mathbf{c}_\epsilon = \mathbf{T}^\top \mathbf{y}.
\]
This linear system has the closed-form solution:
\begin{equation}
\label{eq:damped_c_sol}
    \mathbf{c}_\epsilon = (\mathbf{T}^\top\mathbf{T} + \epsilon \mathbf{I})^{-1} \mathbf{T}^\top \mathbf{y}.
\end{equation}
Differentiating both sides with respect to $\mathbf{y}$:
\[
    \frac{\partial \mathbf{c}_\epsilon}{\partial \mathbf{y}} = (\mathbf{T}^\top\mathbf{T} + \epsilon \mathbf{I})^{-1} \mathbf{T}^\top.
\]
Applying the chain rule again, the gradient used in our stable implementation is:
\begin{equation}
\label{eq:damped_grad_final}
    \frac{\partial \mathrm{\shortterm}}{\partial \mathbf{y}} 
    = \left(\frac{\partial \mathbf{c}_\epsilon}{\partial \mathbf{y}}\right)^\top \frac{\partial \mathrm{\shortterm}}{\partial \mathbf{c}_\epsilon}
    = \mathbf{T} (\mathbf{T}^\top\mathbf{T} + \epsilon \mathbf{I})^{-1} (\mathrm{sign}(\mathbf{c}_\epsilon) \odot \mathbf{d}).
\end{equation}
This confirms that the gradient computation remains valid and analytically tractable when using the damped solver. In the limit $\epsilon \to 0$, Eq.~\eqref{eq:damped_grad_final} recovers the result in Proposition~\ref{prop:diff_ed}.

\section{Additional Correlation Results}
\label{app:additional_correlations}

This appendix provides supplementary results for the correlation analysis discussed in \cref{subsec:correlation_gap}, including detailed results for ViT-Tiny on CIFAR-10 and CLIP models trained without mixup on ImageNet.

\subsection{ViT-Tiny on CIFAR-10}
\label{app:vit_corr}

In the main text, we summarized the correlation analysis for ViT-Tiny. Here, we provide the detailed experimental setup and the corresponding plots.

\paragraph{Training setup.}
We evaluated ViT-Tiny models using a grid of 27 hyperparameter configurations adapted for Transformer architectures. Consistent with the ResNet18 protocol, we report results averaged over three random seeds for each configuration.

\paragraph{Results.}
As shown in \cref{fig:vit_tiny_all_app}, \longterm maintains a strong positive correlation with the generalization gap across diverse hyperparameter settings.
In contrast, sharpness-based measures exhibit weaker predictive power, while the parameter $L_2$ norm correlates negatively with the generalization gap, failing to capture the correct complexity-generalization relationship in this context.

\begin{figure}[h]
	\centering
	\includegraphics[width=\textwidth]{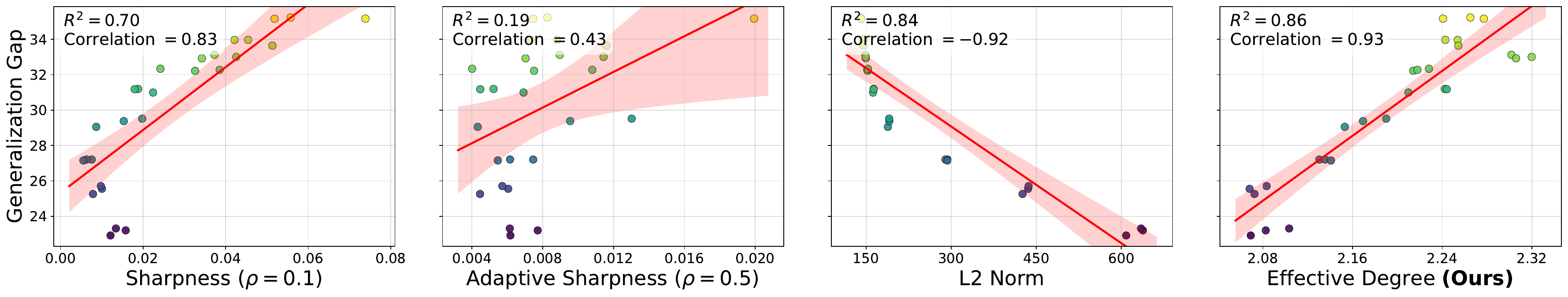}
	\caption{Correlation between \longterm, sharpness-based measures, and parameter $L_2$ norm with generalization gap for ViT-Tiny on CIFAR-10.
		The four panels (left to right) plot generalization gap against standard sharpness, adaptive sharpness, parameter $L_2$ norm, and \longterm, respectively.
		Each point corresponds to the average over three random seeds under a specific hyperparameter configuration.
	}
	\label{fig:vit_tiny_all_app}
\end{figure}

\subsection{CLIP Models on ImageNet without Mixup}
\label{app:imagenet_corr}

We also present the analysis on ImageNet to CLIP models fine-tuned \emph{without} mixup augmentation.
\Cref{fig:imagenet_clip_nomixup} shows the correlation plots for this setting.
Consistent with the mixup-trained models reported in the main text (\cref{fig:imagenet_clip_mixup_active}), \longterm maintains a strong positive correlation with the generalization gap.
Sharpness-based measures continue to exhibit negative correlations, while the parameter $L_2$ norm shows only a weak relationship with generalization performance.

\begin{figure}[h]
	\centering
	\includegraphics[width=\textwidth]{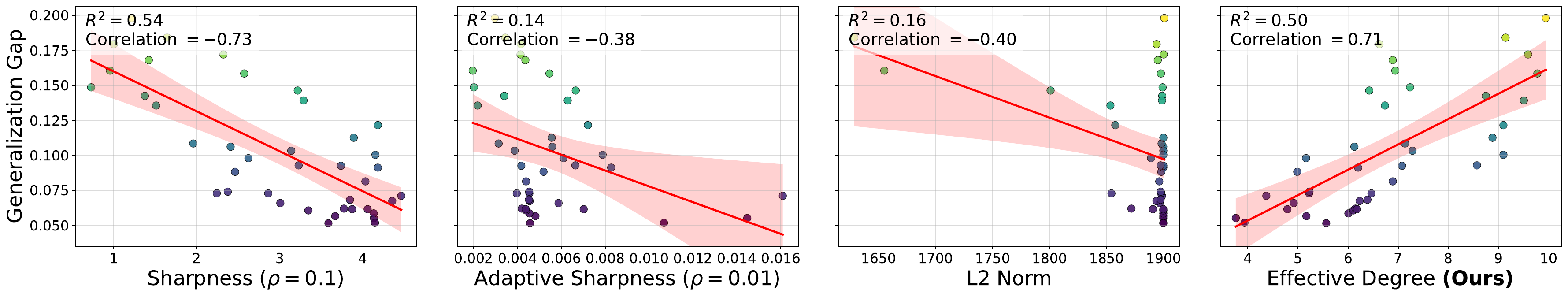}
	\caption{Correlation plots for CLIP models trained without mixup on ImageNet.
		The four panels (left to right) plot generalization gap against standard sharpness, adaptive sharpness, parameter $L_2$ norm, and \longterm, respectively.}
	\label{fig:imagenet_clip_nomixup}
\end{figure}

\section{Ablation Study}
\label{app:ablation}

\subsection{Replacing Real Images with Synthetic Random Images in Constructing Interpolation Paths}
\label{app:ablation_random_pixels}

We study whether the effectiveness of \longterm depends on using \emph{real} data as interpolation endpoints.
To this end, we replace the real images of CIFAR-10 used to construct \longterm with i.i.d. sampled uniform noise on each pixel, referred to as random pixels, and evaluate both (i) the correlation between \longterm and the generalization gap and (ii) test accuracy under regularized training.
For the correlation analysis, we use ResNet18 following the setup in the main text; for regularized training, we use ViT-Tiny with the same training protocol, hyperparameter configurations, and three-seed averaging as in the main text.

\paragraph{Correlation.}
As shown in \cref{fig:resnet18_random_replacement}, replacing real images with random pixels slightly weakens the correlation with the generalization gap on ResNet-18, although the overall positive trend remains.

\begin{figure}[h]
	\centering
	\includegraphics[width=0.3\textwidth]{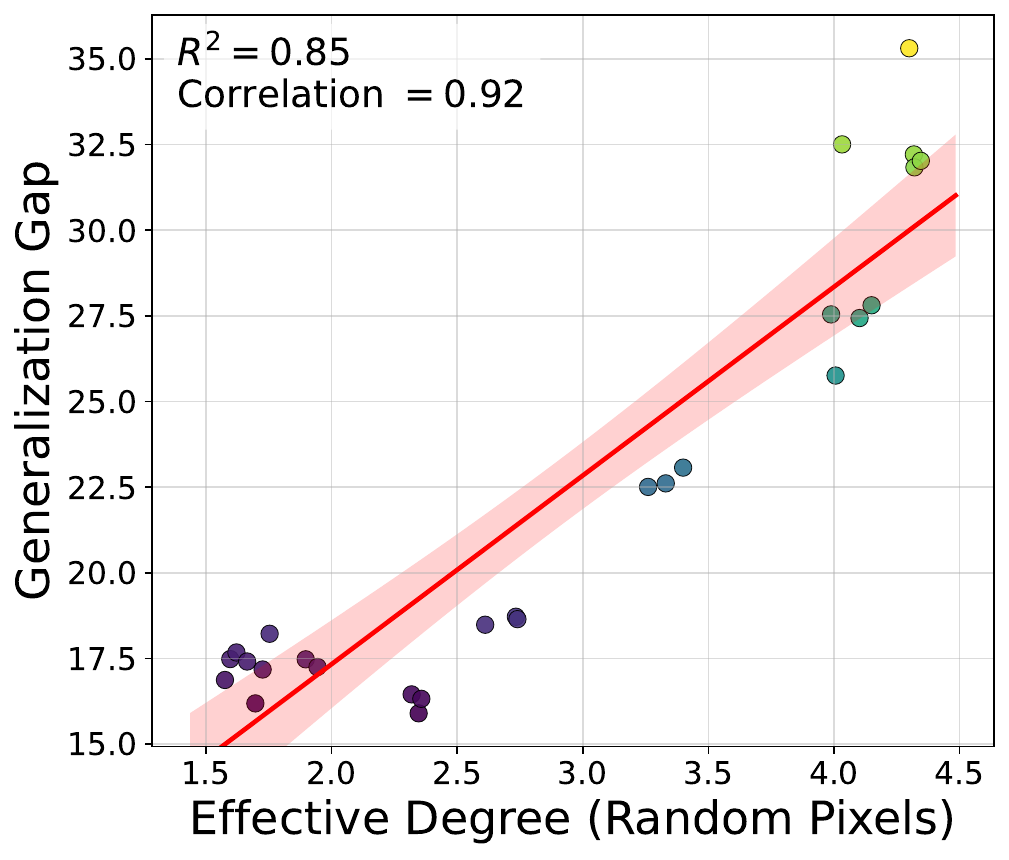}
	\caption{Correlation between \longterm computed with uniformly sampled random pixels and the generalization gap on ResNet-18. Points are averaged over three seeds.}
	\label{fig:resnet18_random_replacement}
\end{figure}

\paragraph{ED regularization.}
\Cref{tab:ablation_random_images_acc} reports the top-1 accuracy on CIFAR-10 for ViT-Tiny under the same explicit regularization setting as in the main text.
Notably, \shortterm with random pixels completely removes the gain of \shortterm, indicating that computing \shortterm using on-distribution samples to construct interpolation paths \emph{near the data manifold} is crucial for its effectiveness.

\begin{table}[t]
	\centering
	\small
	\caption{Top-1 test accuracy on CIFAR-10 (\%, mean $\pm$ std over 3 seeds) for ViT-Tiny under the same explicit regularization setting as in the main text. We compare the Baseline, \shortterm, and an ablation \shortterm{} (random pixels) that replaces real interpolation endpoints with uniformly sampled random images.}
	\label{tab:ablation_random_images_acc}
	\begin{tabular}{lccc}
		\toprule
		Model & Baseline & \shortterm & \shortterm{} (random pixels) \\
		\midrule
		ViT-Tiny
		& $87.80 \pm 1.17$
		& $\mathbf{90.82 \pm 0.11}$
		& $87.31 \pm 0.80$ \\
		\bottomrule
	\end{tabular}
\end{table}

\subsection{Choice of Polynomial Basis}
\label{app:ablation_basis_choice}

We next examine whether the effectiveness of \shortterm depends on the specific choice of polynomial basis.
We use the Chebyshev basis primarily for its numerical stability in polynomial fitting, while viewing the basis as a tractable surrogate for representing the path-restricted function.
Since our simplicity notion is degree-based, orthogonal polynomial bases are natural choices.
By contrast, non-polynomial bases such as Fourier or wavelet bases would induce related but conceptually different notions of spectral simplicity.

To directly test the dependence on the basis choice, we replace the Chebyshev basis with the Legendre basis on CIFAR-10, while keeping the same hyperparameters under the efficiency-oriented configuration and using three random seeds.
The results are reported in \cref{tab:ablation_basis_choice}.

\begin{table}[h]
	\centering
	\small
	\caption{Top-1 test accuracy on CIFAR-10 (\%, mean $\pm$ std over 3 seeds) for ViT-Tiny with different orthogonal polynomial bases. The small difference between Chebyshev and Legendre bases suggests that \shortterm is stable across orthogonal polynomial bases.}
	\label{tab:ablation_basis_choice}
	\begin{tabular}{lc}
		\toprule
		Basis & Test accuracy \\
		\midrule
		Chebyshev & $90.18 \pm 0.14$ \\
		Legendre & $89.89 \pm 1.29$ \\
		\bottomrule
	\end{tabular}
\end{table}

The effect of changing the basis is small.
This suggests that the method is stable across orthogonal polynomial bases, rather than relying specifically on Chebyshev polynomials.

\subsection{Linear Blending in Intermediate Feature Space}
\label{app:feature-space}

Our default construction uses input-space interpolation for continuous
inputs because it is simple and architecture-agnostic. However, \shortterm is
not restricted to input space. For discrete inputs such as text,
direct interpolation over raw tokens is not meaningful, and our
BERT-based experiments in \cref{subsec:glue} therefore construct paths in
embedding space. We further investigate whether \shortterm can be applied to
intermediate feature spaces for vision models.

Specifically, on CIFAR-10, we apply \shortterm regularization to ViT-Tiny
using interpolation paths constructed at three different locations:
the input space, the feature space after the embedding layer, and the
feature space after the first Transformer attention block. The results
are shown in Table~\ref{tab:feature-space-interpolation}.

\begin{table}[h]
\centering
\caption{Top-1 test accuracy on CIFAR-10 (\%) under different
interpolation spaces. \shortterm still improves performance when applied in
intermediate feature spaces, although input-space interpolation gives
the largest gain.}
\label{tab:feature-space-interpolation}
\begin{tabular}{lcccc}
\toprule
Method & Baseline w/o \shortterm & Input & Embedding & After block 1 \\
\midrule
Test accuracy & 86.49 & 89.99 & 88.92 & 88.18 \\
\bottomrule
\end{tabular}
\end{table}

\shortterm still improves performance in intermediate feature spaces, although the gain is smaller than with input-space interpolation. A plausible reason is that intermediate
feature-space interpolation constrains the end-to-end input-output
function less directly, even though the regularization signal still
backpropagates to earlier layers. That said, we note that our current implementation of feature-space interpolation is preliminary and believe it remains a promising direction to be further explored.

\subsection{Sampling Strategy}
\label{app:sampling-strategy}

We also study the effect of the sampling strategy used to construct points along each interpolation path. We compare three choices: fixed Chebyshev sampling from \cref{eq:cheb_nodes}, randomized cosine sampling from \cref{eq:cos_sample}, and uniform sampling. We evaluate both a low-resolution setting with resolution $r=4$ and maximum degree $K=3$, and a higher-degree setting with resolution $r=15$ and maximum degree $K=14$. The results are shown in Table~\ref{tab:sampling-ablation}.

\begin{table}[h]
\centering
\caption{Top-1 test accuracy on CIFAR-10 (\%) under different
sampling strategies. }
\label{tab:sampling-ablation}
\begin{tabular}{ccccc}
\toprule
Resolution $r$ & Max degree $K$ & Sampling & Test accuracy \\
\midrule
4  & 3  & Randomized cosine & 89.99 \\
4  & 3  & Fixed Chebyshev    & 89.47 \\
4  & 3  & Uniform            & 89.23 \\
15 & 14 & Randomized cosine & 89.90 \\
15 & 14 & Fixed Chebyshev    & 90.40 \\
15 & 14 & Uniform            & 78.42 \\
\bottomrule
\end{tabular}
\end{table}

At small resolution and degree, all three sampling strategies perform
reasonably well, with randomized cosine sampling giving the best result
among the compared settings. In the higher-degree setting, however,
uniform sampling becomes unstable and leads to a substantial accuracy
drop. In contrast, the two cosine-based strategies remain effective.

\subsection{Dependence on PCA-Based Output Compression}
\label{app:pca-dependence}

We further study whether the effectiveness of \shortterm depends on the
PCA-based output compression step. In our method, PCA is an optional
per-path output compression module used mainly for computational
efficiency when the model output is high-dimensional, rather than the
source of the \shortterm regularization effect.

To test this, we conduct a CIFAR-10 ablation under the
efficiency-oriented setting, comparing direct polynomial fitting on the
original model outputs without PCA against PCA projections with
different output dimensions. The results are reported in
Table~\ref{tab:pca-dependence}.

\begin{table}[h]
\centering
\caption{Top-1 test accuracy on CIFAR-10 (\%) under different
PCA output dimensions. ``No PCA'' denotes direct fitting on the
original model outputs.}
\label{tab:pca-dependence}
\begin{tabular}{lcccc}
\toprule
Output dimension after PCA & No PCA & 3D & 2D & 1D \\
\midrule
Test accuracy & 89.99 & 89.84 & 89.90 & 88.75 \\
\bottomrule
\end{tabular}
\end{table}

These results suggest that \shortterm is not mainly driven by PCA. Direct
fitting on the original outputs already achieves strong performance,
and PCA compression to two or three dimensions remains very close to
the no-PCA result. Only aggressive compression to one dimension
noticeably hurts performance, indicating that overly low-dimensional
projections may discard useful output variation along the interpolation
paths.

\section{Experimental Details for Generalization Analysis}
\label{app:experimental_details}

This appendix details the experimental protocols specifically for the correlation and grokking analyses presented in \cref{sec:generalization_prediction}.

\subsection{Generalization Prediction on CIFAR-10 and ImageNet}
\label{app:exp_correlation}

We describe the generation of the model pool and the specific protocols for estimating \longterm in the correlation experiments.

\paragraph{Model pool generation.}
To evaluate the correlation between complexity and generalization, we trained a diverse set of models on CIFAR-10 by sweeping over key hyperparameters.
For \textbf{ResNet18}, following \citet{li_understanding_2025}, we train models using SGD with a MultiStep learning rate schedule, sweeping over batch sizes $\{256, 512, 1024\}$, learning rates $\{0.1, 0.01, 0.001\}$, and weight decays $\{10^{-5}, 10^{-6}, 10^{-7}\}$. This yields 27 unique configurations, each averaged over three random seeds.
For \textbf{ViT-Tiny}, we employ the AdamW optimizer~\citep{loshchilov_decoupled_2019} with cosine learning rate decay, sweeping batch sizes $\{256, 512, 1024\}$, learning rates $\{0.005, 0.001, 0.0005\}$, and weight decays $\{10^{-3}, 10^{-4}, 10^{-5}\}$.

\paragraph{Baseline configurations.}
For sharpness-based baselines, we report the best correlation achieved across a range of neighborhood sizes $\rho$.
For standard sharpness, we sweep $\rho \in \{0.01, 0.05, 0.1\}$. For adaptive sharpness, we sweep $\rho \in \{0.01, 0.05, 0.1, 0.5, 1.0\}$.
On CIFAR-10, sharpness is computed using the full training set; on ImageNet, we use a subset of 2,048 samples following \citet{silva_hide_2025}.

\paragraph{Estimation protocol of effective degree.}
For the correlation analysis, we employ a high-precision fitting protocol to minimize variance. For both CIFAR-10 and ImageNet experiments, we average results over 400 independent estimations, with the resolution set to 200 and the maximum polynomial degree fixed at 40.
We report the best correlation achieved over metric variants (raw vs.\ normalized \shortterm).
Specifically, for the raw variant, we fit the polynomial to the model's output after Softmax (as it provides inherent normalization), whereas for the normalized variant, we fit the logits directly, with normalization explicitly handled within the \shortterm calculation.
For ImageNet models (1,000 classes), we additionally use PCA to project outputs onto lower-dimensional subspaces ($m \in \{1, 3, 10\}$) prior to fitting, reporting the highest correlation across these configurations.

\subsection{Grokking Dynamics}
\label{app:exp_grokking}

We investigate grokking using the Modular Division task over $\mathbb{Z}_{97}$. The model is a 2-layer Transformer trained on a 30\% subset of the data using AdamW (learning rate $10^{-3}$, weight decay 0), following the setup of \citet{power_grokking_2022}.

\paragraph{Complexity tracking.}
We monitor \longterm and baselines throughout the training trajectory.
For \longterm estimation during grokking, we sample $n_p=200$ direction pairs to ensure stability. For the polynomial approximation along each 1D slice, we set the resolution to 64 and the maximum degree to 40. This configuration balances computational efficiency with the precision required to track the phase transition.

\section{\shortterm Regularization Pseudocode}
\label{appsec:algo}

We provide the pseudocode for our proposed \shortterm regularization training scheme in \cref{alg:shortterm_regularization}.

\begin{algorithm}[t]
  \caption{Training with \longterm (ED) regularization}
  \label{alg:shortterm_regularization}
  \begin{algorithmic}[1]
    \renewcommand\algorithmicrequire{\textbf{Input:}}
    \renewcommand\algorithmicensure{\textbf{Output:}}
    
    \REQUIRE
      Model $f_\theta$; training data $\mathcal{D}$;
      regularization strength $\lambda$;
      number of sampled pairs $n_p$;
      sampling resolution $r$; maximum polynomial degree $K$;
      jitter $\epsilon$; 
      \textbf{optional:} target PCA dim $m$. 
    \ENSURE Trained parameters $\theta$.

    \FOR{each minibatch $\mathcal{B}=\{(\mathbf{x}_b, \mathbf{t}_b)\}_{b=1}^B$}
      \STATE Compute $\mathcal{L}_{\mathrm{task}}(\theta;\mathcal{B})$.
      \STATE $\widehat{\mathrm{\shortterm}}_{\mathcal{B}} \gets 0$. \COMMENT{initialize regularizer estimate}

      \FOR{$p \gets 1$ to $n_p$}
        \STATE Sample $(\mathbf{x}_1, \mathbf{t}_1), (\mathbf{x}_2, \mathbf{t}_2)$ uniformly from $\mathcal{B}$. \COMMENT{require labels $\mathbf{t}_1, \mathbf{t}_2$ to use label-anchoring}
        \STATE Sample nodes $\{\alpha_\ell\}_{\ell=1}^r \subset [0,1]$ via a sampling scheme. \COMMENT{Chebyshev nodes or randomized cosine sampling}
        
        \STATE Set $\mathbf{x}(\alpha_\ell) \gets \alpha_\ell \mathbf{x}_1 + (1-\alpha_\ell) \mathbf{x}_2$ for $\ell=1,\dots,r$. \COMMENT{linear interpolation between $\mathbf{x}_1$ and $\mathbf{x}_2$}
        \STATE $\mathbf{y}_\ell \gets f_\theta(\mathbf{x}(\alpha_\ell))$, for $\ell=1,\dots,r$.

        \STATE \textbf{If} label-anchored: set $\mathbf{y}_1 \gets \mathbf{t}_2$ and $\mathbf{y}_r \gets \mathbf{t}_1$. \COMMENT{optional: apply label-anchoring on classification tasks}

        \STATE \textbf{If} use PCA: project $\{\mathbf{y}_\ell\}_{\ell=1}^r$ to $\mathbb{R}^{m}$ and set $n \gets m$; \textbf{else} set $n \gets \dim(\mathbf{y}_1)$. \COMMENT{optional: output reduction}

        \FOR{$j \gets 1$ to $n$}
          \STATE $\mathbf{c}^{(j)} \gets \texttt{LinearSolve}(\mathbf{T}^\top\mathbf{T} + \epsilon \mathbf{I},\, \mathbf{T}^\top \mathbf{y}^{(j)})$. \COMMENT{solve coefficients}
          \STATE $\mathrm{\shortterm}_p^{(j)} \gets \sum_{k=0}^{K} \lvert c^{(j)}_k\rvert \, k$. \COMMENT{\longterm\ for $j$-th dim}
        \ENDFOR

        \STATE $\mathrm{\shortterm}_p \gets \frac{1}{n}\sum_{j=1}^{n} \mathrm{\shortterm}_p^{(j)}$. \COMMENT{average over $n$ dimensions}
        \STATE $\widehat{\mathrm{\shortterm}}_{\mathcal{B}} \gets \widehat{\mathrm{\shortterm}}_{\mathcal{B}} + \mathrm{\shortterm}_p$. 
      \ENDFOR

      \STATE $\widehat{\mathrm{\shortterm}}_{\mathcal{B}} \gets \widehat{\mathrm{\shortterm}}_{\mathcal{B}} / n_p$. \COMMENT{average over sampled pairs}
      \STATE $\mathcal{L}(\theta;\mathcal{B}) \gets \mathcal{L}_{\mathrm{task}}(\theta;\mathcal{B}) + \lambda\,\widehat{\mathrm{\shortterm}}_{\mathcal{B}}$.
      \STATE Update $\theta$ via gradient descent on $\mathcal{L}(\theta;\mathcal{B})$. 
    \ENDFOR

    \STATE \textbf{return} $\theta$.
  \end{algorithmic}
\end{algorithm}

\section{Experimental Details for \shortterm Regularization}
\label{app:experimental_details_reg}
In this section, we detail the experimental settings for the baselines and the configurations for \shortterm regularization.

To ensure a fair comparison and reproducibility, our experimental settings adhere to the following principles:
\begin{itemize}
    \item \textbf{Baselines:} We strictly follow the hyperparameter configurations reported in the original papers or official open-source implementations. For baselines implemented by us, we perform a standard grid search to ensure optimal performance.
    \item \textbf{\shortterm regularization:} Implementing \shortterm involves configuring several hyperparameters: the regularization strength $\lambda$, sampling resolution $r$, maximum polynomial degree $K$, number of input pairs $n_p$, and the projection dimension $m$ (when using PCA). To reduce the burden of hyperparameter tuning and demonstrate robustness of our method, we first use CIFAR-10 as a primary testbed to analyze hyperparameter sensitivity and identify a robust default \textbf{efficiency-oriented configuration} (described in \cref{sec:cifar10_settings}). We then \emph{fix} these structural parameters except $\lambda$ for all subsequent experiments (ImageNet, language modeling, reinforcement learning). Specifically, we set the sampling resolution to $r=4$, the maximum polynomial degree to $K=3$, and the number of sampled pairs to half the batch size ($n_p = B/2$) or the full batch size. For tasks with high-dimensional output spaces (e.g., ImageNet, reinforcement learning), we apply PCA with a projected dimension of $m=3$ to mitigate computational overhead. Notably, we consistently enforce $m = r-1$, ensuring that $m$ serves as a dependent variable rather than an additional hyperparameter. Furthermore, given the resolution $r=4$, we utilize randomized cosine sampling (Eq.~\eqref{eq:cos_sample}) to enhance the diversity of the sampling points. As detailed in \cref{sec:efficiency_analysis}, this configuration offers a highly favorable computational profile.
    
    Moreover, we consistently employ the standard (unnormalized) effective degree $\mathrm{\shortterm}$ (Definition~\ref{def:effective_degree}) as the regularization penalty. 
    This choice is motivated by the fact that we fit post-softmax probabilities, which are inherently normalized and bounded; thus, the unnormalized ED provides a stable complexity measure without requiring additional scale invariance.
    Consequently, the only hyperparameter requiring tuning for new tasks is the regularization strength $\lambda$.
\end{itemize}

\subsection{Settings for CIFAR-10}
\label{sec:cifar10_settings}
The baseline, \shortterm regularization, mixup, SAM and ASAM share the same backbone, optimizer, learning-rate schedule, and preprocessing, and differ only in the corresponding training strategy.
We train all models using AdamW for 300 epochs, with batch size 256, base learning rate 0.005, and weight decay 0.1.
The learning rate is linearly warmed up for the first 10 epochs and then decayed with cosine annealing to a minimum learning rate of 0. Random cropping and horizontal flip data augmentations are used.

\paragraph{Mixup.}
For mixup, we sample the mixing coefficient $\lambda \sim \mathrm{Beta}(\alpha,\alpha)$ with $\alpha=1.0$, which reduces to a uniform distribution on $[0,1]$.
Given two training examples $(x_1,t_1)$ and $(x_2,t_2)$, mixup constructs a mixed sample
$\tilde{x}=\lambda x_1+(1-\lambda)x_2$ with the corresponding soft label
$\tilde{t}=\lambda t_1+(1-\lambda)t_2$.

\paragraph{SAM/ASAM.}
For sharpness-aware minimization (SAM) and its adaptive variant (ASAM), we follow the standard formulations and tune the neighborhood size $\rho$ using the grids from the original papers.
Specifically, for SAM we use $\rho \in \{0.01, 0.02, 0.05, 0.1, 0.2, 0.5\}$,
and for ASAM we use $\rho \in \{5\times10^{-5}, 10^{-4}, 2\times10^{-4}, \ldots, 0.5, 1.0, 2.0\}$.
For each method, we report the best result over the corresponding $\rho$ grid (selected by top-1 test accuracy).

\paragraph{Jacobian regularization.}
For Jacobian regularization (JR), we employ random projections to efficiently approximate the Jacobian norm.
We tune the regularization coefficient $\lambda_{\text{JR}}$ over the grid $\{0.01, 0.05, 0.1, 0.5, 1.0\}$.
Experiments are repeated with three random seeds, and we report the best result selected by top-1 test accuracy.

\paragraph{\shortterm regularization.}
For \shortterm regularization, we implement a ramp-up schedule for the regularization strength over the first 100 epochs, increasing sinusoidally from 0 to $\lambda$. This approach facilitates rapid task learning in the early stages of training before enforcing stronger complexity control. Additionally, we adopt the label-anchored \shortterm strategy described in Section~\ref{sec:label_anchored}. Regarding the structural hyperparameters, specifically resolution $r$, degree $K$, and the number of pairs $n_p$, we explore two distinct configurations to balance performance and efficiency:



\begin{itemize}
    \item \textbf{Performance-oriented configuration:} 
    In our initial search for optimal performance, we observe that higher sampling resolutions generally improve the stability and effectiveness of the regularizer with a fixed Chebyshev-node sampling scheme (Eq.~\eqref{eq:cheb_nodes}).
    The optimal setting is $r=15$, $K=7$ with regularization strength $\lambda=2$ and compute \shortterm using $n_p=256$ sampled pairs within each minibatch. 
    While these settings yield the best accuracy, they incur higher computational costs.

    \item \textbf{Efficiency-oriented configuration:} \label{itm:efficiency_config}
    To enhance practical applicability, we evaluate a lightweight configuration characterized by $r=4$, $K=3$, and $n_p=128$ (corresponding to half the batch size) with randomized cosine sampling. For this setting, the regularization strength was determined to be $\lambda=7$ based on empirical search. 
    Despite the reduced resolution, this setting achieves competitive results (as shown in Table~\ref{tab:ablation_between_resolution}) while significantly reducing training time. Consequently, this serves as a standard setting where resolution parameters are fixed, reducing the hyperparameter search space to only $\lambda$.
\end{itemize}

\begin{table}[t]
	\centering
	\small
	\caption{Top-1 test accuracy on CIFAR-10 (\%, mean $\pm$ std over 3 seeds) investigating the performance of the performance-oriented configuration and the efficiency-oriented configuration}
	\label{tab:ablation_between_resolution}
	\begin{tabular}{lcc}
		\toprule
		Model & Performance-oriented configuration & Efficiency-oriented configuration \\
		\midrule
		ViT-Tiny
		& $\mathbf{90.82 \pm 0.11}$
		& $90.18 \pm 0.14$ \\
		\bottomrule
	\end{tabular}
\end{table}

\subsection{Settings for ImageNet}
\label{sec:imagenet_settings}

For the ImageNet experiments, we employ the ViT-S/16 architecture and evaluate our method under two training recipes:
\begin{itemize}
    \item \textbf{Original recipe:} This protocol largely follows the standard recipe outlined in~\citet{dosovitskiy_image_2021}. To accommodate computational constraints, we adjust the configuration by training for 90 epochs using the AdamW optimizer with a reduced global batch size of 1024. All other hyperparameters remain unchanged.
    \item \textbf{Strong recipe:} We adopt the improved training recipe and hyperparameter settings proposed by~\citet{beyer2022better} without mixup augmentation, which serves as a stronger baseline.
\end{itemize}

\paragraph{\shortterm regularization.}
We apply the same \shortterm regularization configuration across both training settings.
The regularization setup shares similarities with our CIFAR-10 experiments but includes adaptations for the larger output space.
We employ the label-anchored \shortterm strategy in conjunction with randomized cosine sampling.
To efficiently scale to the 1,000-class output space, we apply PCA to reduce the model outputs to 3 principal components before fitting the polynomials, consistent with the strategy described earlier.
We configure the hyperparameters as follows: regularization strength $\lambda=3$, sampling resolution $r=4$, maximum polynomial degree $K=3$, and number of sampled pairs per minibatch $n_p=512$.
Similar to the CIFAR-10 settings, we apply a sinusoidal ramp-up schedule for $\lambda$ during the first 30 epochs (approx. 1/3 of the training duration) to allow the model to learn adequate representations before enforcing stronger complexity control.

\subsection{Settings for CLIP Fine-Tuning on ImageNet}
\label{sec:imagenet_finetune_settings}

We adhere to the end-to-end fine-tuning protocol outlined in~\citet{wortsman2022robust}.
Specifically, we train the pre-trained models for 10 epochs with a batch size of 512 and a fixed learning rate of $3 \times 10^{-5}$.

\paragraph{\shortterm regularization.}
We adopt the same label-anchored \shortterm strategy and PCA projection as in our ImageNet experiments from scratch.
For this fine-tuning setting, we adjust the hyperparameters to $\lambda=2$ and $n_p=256$ (maintaining $r=4$ and $K=3$).
Notably, we omit the warmup schedule for $\lambda$ to enforce immediate regularization, as required by the short 10-epoch training duration.





\subsection{Settings for Natural Language Processing Tasks}
\label{sec:nlp_settings}

We fine-tune the BERT-base model on the RTE, MRPC, and CoLA datasets.
To ensure a rigorous comparison, we first establish strong baselines by performing a grid search over the following hyperparameters: learning rate $\in \{2\times10^{-5}, 5\times10^{-5}, 1\times10^{-4}\}$, batch size $\in \{16, 32, 64\}$, and training epochs $\in \{5, 10, 20\}$.
All reported results are averaged over 9 random seeds.

\paragraph{Adaptation for embedding interpolation.}
Unlike images, raw text data consists of discrete tokens, prohibiting direct linear interpolation in the input space.
To enable the construction of interpolation paths for \shortterm regularization, we adopt the \textit{embedding interpolation} strategy inspired by~\cite{guo2019augmenting}.
Specifically, let $E(\cdot)$ denote the embedding function that maps input tokens to their vector representations (aggregating word embeddings, positional encodings and segment embeddings).
Given two sentence inputs $x_1, x_2$, we perform linear interpolation on their embedding representations immediately after encoding: $h(\alpha) = \alpha E(x_1) + (1-\alpha) E(x_2)$.
This intermediate manifold is then propagated through the transformer layers to compute the \longterm of the decision trace.  

\paragraph{Method-specific configurations.}
For {mixup}, we employ an embedding interpolation strategy  $\lambda \sim \mathrm{Beta}(\alpha,\alpha)$ with $\alpha=1.0$.
For {\shortterm regularization}, we adopt the label-anchored \shortterm strategy with randomized cosine sampling.
Across all tasks, we fix the sampling resolution to $r=4$ and the maximum polynomial degree to $K=3$.
Consistent with the short fine-tuning duration, we do not apply a warmup schedule for regularization strength $\lambda$.

\medskip
Based on the grid search, the optimal baseline configurations and the task-specific \shortterm hyperparameters (regularization strength $\lambda$ and sampled pairs $n_p$) are set as follows:
\begin{itemize}
	\item \textbf{RTE:} We use a learning rate of $5\times10^{-5}$, batch size 32, and train for 10 epochs. For \shortterm, we set $\lambda=0.5$.
	\item \textbf{MRPC:} We use a learning rate of $5\times10^{-5}$, batch size 16, and train for 10 epochs. For \shortterm, we set $\lambda=0.5$.
	\item \textbf{CoLA:} We use a learning rate of $2\times10^{-5}$, batch size 64, and train for 20 epochs. For \shortterm, we set $\lambda=0.3$.
\end{itemize}

\subsection{Settings for Reinforcement Learning}
\label{sec:rl_settings}

Following standard protocols~\cite{wang_improving_2020}, we train agents on a fixed set of 500 levels (train) and evaluate them on a distinct set of unlimited unseen levels (test).
We adhere to the implementation details and base hyperparameters provided in~\citet{huang2022cleanrl}.
Agents are trained for 50M timesteps, with the exception of \textit{FruitBot}, which is trained for 100M timesteps to accommodate its longer startup phase.

\paragraph{Adaptation for actor regularization.}
We apply our method to the Proximal Policy Optimization (PPO) algorithm, utilizing a CNN-based actor-critic architecture.
The actor network $\pi_\theta(a|s)$ determines the agent's behavior, while the critic $V_\phi(s)$ estimates the value function.
We introduce the \shortterm penalty exclusively to the actor network.
The rationale for this design choice is twofold:
First, the Actor defines the decision boundary; ensuring the smoothness of the policy with respect to state variations directly translates to more robust and generalized action selection.
Second, the value function often requires sharp transitions to accurately reflect sudden changes in expected returns (e.g., cliffs or immediate rewards), making complexity penalization potentially detrimental to value estimation.

\paragraph{\shortterm regularization.}
For the regularization setup, we employ randomized cosine sampling in conjunction with PCA dimensionality reduction.
Specifically, we project the output vectors of the policy network onto a subspace of $m=3$ dimensions.
We adopt the same base hyperparameters for all four tasks: regularization strength $\lambda=0.01$, sampling resolution $r=4$, maximum polynomial degree $K=3$ and sampled pairs $n_p = 2048$ (half of the minibatch size).
    For \textit{Fruitbot}, \textit{Jumper}, and \textit{StarPilot}, we use a sinusoidal ramp-up schedule for the first $1/6$ of the training steps.

Distinct from the supervised learning settings in previous sections, fixed ground-truth action labels are unavailable in reinforcement learning. We thus employ ED regularization without the label-anchoring strategy and directly constrain the \longterm of the policy network to induce a simpler actor.

\section{Computational Efficiency Analysis}
\label{sec:efficiency_analysis}

In this section, we analyze the computational overhead incurred by the additional sampling process.
Our analysis is based on the efficiency-oriented configuration described in \cref{sec:cifar10_settings}, which was shown to yield competitive results in our experiments. Specifically, this setting employs a sampling resolution of $r=4$ and a pair count of $n_p = B/2$, where $B$ denotes the batch size.

Under the label-anchored \shortterm strategy, the two endpoints of the interpolation trajectory are fixed to the ground-truth anchors and do not require model forward passes.
Consequently, gradient propagation is only required for the $r-2=2$ intermediate nodes per sampled pair.
Given that we sample $n_p = B/2$ pairs per iteration, the total number of additional forward passes is:
\[
\text{Additional Passes} = n_p \times (r-2) = \frac{B}{2} \times 2 = B.
\]
This overhead is exactly equivalent to the computational cost of processing one nominal batch. Since the polynomial fitting operations are fully parallelized and incur negligible cost compared to the model forward passes, this implies a theoretical cost multiplier of approximately $2\times$.



To validate this empirically, we conducted time-cost experiments on CIFAR-10 with a batch size of $B=256$. The average time per epoch was $6.14\text{s}$ without regularization, whereas introducing \shortterm regularization increased this to $9.75\text{s}$. We further verified these observations using the CLIP fine-tuning setup ($B=512$), where the average time per iteration increased from $0.44\text{s}$ (without regularization) to approximately $0.90\text{s}$ (with \shortterm). These results align with the theoretical prediction of a $2\times$ increase, as the specific GPU resource demands differ between these two tasks.

\section{Failure Analysis}
\label{app:failure_analysis}
We include a failure case to clarify a limitation of ED regularization. 
Motivated by prior work on simplicity bias, 
we hypothesize that ED regularization may fail when the simpler feature is also the less robust one. 
To test this hypothesis, we construct \textsc{MNIST--CIFAR} following~\citet{shah_pitfalls_2020}, 
a synthetic dataset formed by concatenating MNIST images from classes $0$ and $1$ with CIFAR-10 images from the automobile and truck classes. 
The resulting binary classification task can be solved using either simple MNIST features or more complex CIFAR-10 features.

We train ViT-Tiny with standard training and with ED regularization. 
To assess which component the model relies on, we evaluate standard test accuracy, test accuracy after randomizing the MNIST component, and test accuracy after randomizing the CIFAR-10 component. 
The results are shown in Table~\ref{tab:mnist-cifar-failure}.

\begin{table}[t]
\centering
\caption{
Failure case on \textsc{MNIST--CIFAR}. 
Both the baseline and ED-regularized model rely almost entirely on the simpler MNIST signal. 
Randomizing the MNIST component causes accuracy to collapse, whereas randomizing the CIFAR component has almost no effect.
}
\label{tab:mnist-cifar-failure}
\begin{tabular}{lcc}
\toprule
Metric & Baseline & ED \\
\midrule
Test acc. & $99.85$ & $99.90$ \\
Test acc. (MNIST randomized) & $48.05$ & $47.95$ \\
Test acc. (CIFAR randomized) & $99.85$ & $99.90$ \\
\bottomrule
\end{tabular}
\end{table}

Both models achieve near-perfect standard test accuracy. 
However, when the MNIST component is randomized, accuracy drops to nearly chance level, while randomizing the CIFAR-10 component leaves accuracy essentially unchanged. 
This indicates that both models rely almost entirely on the simpler MNIST signal. 
ED regularization does not reduce this reliance or improve robustness in this setting.

We therefore view \textsc{MNIST--CIFAR} as a plausible failure case of ED regularization: 
when simple features are easier to exploit but less desirable for robust generalization, ED regularization may fail to encourage the use of more generalizable complex features.

\section{Controlled Study on Polynomial Neural Networks}
\label{app:ablation_pnn}

We further examine \longterm in a controlled setting where the notion of algebraic degree is intrinsic to the model class.
Specifically, we consider polynomial neural networks (PNNs), which provide a simple setting for testing whether \longterm can recover the intended complexity ordering of polynomial target functions.
This experiment also complements the discussion in \cref{subsec:connetcing_ed_gen_bounds} by connecting \longterm to settings where complexity and generalization are more directly characterized.

We consider learning polynomial mappings from $\mathbf{x}=(x_1,x_2,x_3)\in\mathbb{R}^3$ to $\mathbf{y}=(y_1,y_2,y_3)\in\mathbb{R}^3$.
We construct three target mappings with increasing algebraic degree:
\begin{align*}
\text{Task 1:}\quad 
\mathbf{y} &= (x_3+2,\; x_2+3,\; x_1+1), \\
\text{Task 2:}\quad 
\mathbf{y} &= (x_1x_2,\; x_2x_3,\; x_1x_3), \\
\text{Task 3:}\quad 
\mathbf{y} &= \left(x_1x_2x_3,\; x_1^2x_2x_3^2,\; x_1^2x_2x_3+x_2^2x_3\right).
\end{align*}
Tasks 4--6 are constructed by multiplying the outputs of Tasks 1--3 by a factor of $2$, respectively, while preserving their algebraic degrees.
This allows us to test the scale dependence of \longterm and its normalized variant.

We train a 4-layer PNN with square activation functions, consisting of three hidden layers followed by a linear output layer.
The model is trained to fit each target mapping nearly perfectly.
We then evaluate \longterm and several variants, including normalized \longterm, Legendre-basis \longterm, and PCA-based output reductions.
The results are summarized in \cref{tab:pnn_controlled_study}.

\begin{table}[t]
	\centering
	\small
	\setlength{\tabcolsep}{5pt}
	\caption{
	Controlled study on polynomial neural networks.
	We report \shortterm and its variants on polynomial target mappings with increasing algebraic degree.
	Tasks 4--6 are scaled versions of Tasks 1--3, respectively.
	}
	\label{tab:pnn_controlled_study}
	\begin{tabular}{lcccccc}
		\toprule
		Task 
		& Algebraic degree
		& \shortterm{} (Chebyshev)
		& \shortterm{}$_{\mathrm{norm}}$ (Chebyshev)
		& \shortterm{} (Legendre)
		& \shortterm{} (PCA-1)
		& \shortterm{} (PCA-2) \\
		\midrule
		Task 1 & $1$ & $0.72$ & $0.29$ & $0.73$ & $1.40$ & $0.70$ \\
		Task 2 & $2$ & $1.17$ & $0.78$ & $1.33$ & $2.12$ & $1.34$ \\
		Task 3 & $5$ & $2.92$ & $1.17$ & $3.64$ & $5.32$ & $3.40$ \\
		Task 4 & $1$ & $1.34$ & $0.28$ & $1.35$ & $2.67$ & $1.34$ \\
		Task 5 & $2$ & $2.33$ & $0.82$ & $2.63$ & $4.13$ & $2.62$ \\
		Task 6 & $5$ & $5.30$ & $1.17$ & $6.64$ & $9.62$ & $6.16$ \\
		\bottomrule
	\end{tabular}
\end{table}

These controlled polynomial-learning experiments reveal several useful properties of \longterm.
First, \longterm preserves the degree ordering of the target mappings:
Tasks 1, 2, and 3 have increasing algebraic degree, and their corresponding \longterm values increase accordingly.
The same ordering also holds for the scaled versions, Tasks 4, 5, and 6.
Second, the standard \longterm is not scale-invariant: scaling the outputs by a factor of $2$ approximately increases the corresponding \longterm values.
In contrast, \shortterm{}$_{\mathrm{norm}}$ is nearly invariant to this scaling.
Third, the same degree ordering is preserved when replacing the Chebyshev basis with the Legendre basis, suggesting that the method does not rely on a specific orthogonal polynomial basis.
Finally, PCA-based output reduction to one or two dimensions still preserves the same ordering, indicating that \longterm remains effective under moderate dimensionality reduction.

\end{document}